\documentclass[sigconf]{acmart}
\newtheorem{definition}{Definition} 
\usepackage[table]{xcolor} 
\usepackage{pifont} 
\usepackage{subcaption}
\usepackage{algorithm}
\usepackage{algorithmic}
\usepackage{xspace}
\usepackage{enumitem}
\usepackage{graphics}
\usepackage{mathtools}
\usepackage{url}
\AtBeginDocument{%
  }
\allowdisplaybreaks 

\copyrightyear{2026}
\acmYear{2026}
\setcopyright{cc}
\setcctype{by}
\acmConference[WWW '26]{Proceedings of the ACM Web Conference 2026}{April 13--17, 2026}{Dubai, United Arab Emirates}
\acmBooktitle{Proceedings of the ACM Web Conference 2026 (WWW '26), April 13--17, 2026, Dubai, United Arab Emirates}
\acmPrice{}
\acmDOI{10.1145/3774904.3792175}
\acmISBN{979-8-4007-2307-0/2026/04}

\begin{document}
\setlength{\textfloatsep}{1.0pt}
\newcommand{\unclear}[1]{{\color{red}{#1}}\noindent}

\renewcommand{\dblfloatpagefraction}{1.0}
\newcommand{\TSK}[1]{#1}
\newcommand{\tensakuClean}{
    \renewcommand{\TSK}[1]{}
}

\newcommand{\vswd}{\vspace{0.0em}}
\newcommand{\bit}{\begin{itemize}}
\newcommand{\eit}{\end{itemize}}
\newcommand{\ben}{\vspace{-0em}\begin{enumerate}}
\newcommand{\een}{\end{enumerate}}
\newcommand{\bal}{\begin{align}}
\newcommand{\eal}{\end{align}}
\newcommand{\beq}{\vspace{-0.0em}\begin{equation}}
\newcommand{\eeq}{\end{equation}}

\newcommand{\argmax}{\mathop{\rm arg~max}\limits}
\newcommand{\argmin}{\mathop{\rm arg~min}\limits}
\newcommand{\emp}{\bf \underline}
\newcommand{\mrk}{$\surd$}  
\newcommand{\R}{\mathbb{R}}  
\newcommand{\C}{\mathbb{C}}  
\newcommand{\N}{\mathbb{N}}  

\newcommand{\goal}[1]{ {\noindent {$\Rightarrow$} \em {#1} } }
\newcommand{\hide}[1]{}
\newcommand{\reminder}[1]{\textsf{\textcolor{red}{#1}}}
\newcommand{\reminderrm}[1]{\textsf{\textcolor{red}{[#1]}}}
\newcommand{\add}[1]{\textsf{\textcolor{blue}{#1}}}
\newcommand{\duty}[1]{{\textsf{\textcolor{blue}{[#1's job]}}}}
\newcommand{\vectornorm}[1]{\left|\left|#1\right|\right|}
\newcommand{\mypara}[1]{\vspace{1.00em}\noindent\textbf{#1.}}
\newcommand{\myparabf}[1]{\vspace{0.00em}\noindent\textbf{#1}}
\newcommand{\myparaitemize}[1]{\noindent{\textbf{#1.}}}

\newcommand{\bi}{\bfseries\itshape}
\newcommand{\ac}[1]{\acute{#1}}

\newcommand{\prop}[1]{({\bf{#1}})\xspace}

\newtheorem{informalProblem}{InformalProblem}
\newtheorem{observation}{\textsc{Observation}}
\newtheorem{problem}{\textsc{Problem}}



\newcommand{\tensor}{\mathcal{X}}
\newcommand{\tensorE}{\mathcal{X}^{E}}
\newcommand{\tensorC}{\mathcal{X}^{C}}
\newcommand{\tensorF}{\mathcal{X}^{F}}
\newcommand{\INFTY}{\mathbb {R}}
\newcommand{\tensorShape}{\prod_{\ldmode=1}^{\ndmode}\nunits_{\ldmode} \times\prod_{\lcmode=1}^{\ncmode}\INFTY}

\newcommand{\Matt}{\mathbf{A}}
\newcommand{\Mtime}{\mathbf{B}}
\newcommand{\Mcatt}{\mathbf{C}}

\newcommand{\Patt}{\mathbf{\hat{A}}}
\newcommand{\Pcatt}{\mathbf{\hat{C}}}
\newcommand{\Ptime}{\mathbf{\hat{B}}}

\newcommand{\Vatt}{\bm{a}}
\newcommand{\Vtime}{\bm{b}}
\newcommand{\PVatt}{\bm{\hat{a}}}
\newcommand{\PVtime}{\bm{\hat{b}}}

\newcommand{\Eatt}{{a}}
\newcommand{\Etime}{{b}}
\newcommand{\patt}{\hat{a}}
\newcommand{\ptime}{\hat{b}}

\newcommand{\nunits}{U}
\newcommand{\nmode}{M}
\newcommand{\ngrid}{G}
\newcommand{\ndmode}{{M_1}}
\newcommand{\ncmode}{{M_2}}
\newcommand{\ntopic}{K}
\newcommand{\ntime}{T}
\newcommand{\nctime}{{T_c}}
\newcommand{\nevent}{{N_\ltime}}
\newcommand{\nepoch}{E}
\newcommand{\nlfbgs}{I}
\newcommand{\Ntime}{{N_\ltime}}
\newcommand{\Ntopic}[1][]{{\mathbf{N}_{#1}^{(\ntopic)}}}
\newcommand{\Ntopicmode}[1][]{\mathbf{N}_{#1}^{(\ldmode)}}
\newcommand{\Ntopicgrid}[1][]{\mathbf{N}_{#1}^{(\lcmode)}}
\newcommand{\Ntimetopic}{N_{\ltime,\ltopic}}
\newcommand{\Stopic}{\mathbf{S}^{(\ntopic)}}
\newcommand{\Stopicmode}{\mathbf{S}^{(\ldmode)}}
\newcommand{\Stopicgrid}{\mathbf{S}^{(\lcmode)}}
\newcommand{\Stimetopic}{\mathbf{S}_{\ltime,\ltopic}}

\newcommand{\lunit}{u}
\newcommand{\lmode}{m}
\newcommand{\lgrid}{g}
\newcommand{\ldmode}{{m_1}}
\newcommand{\lcmode}{{m_2}}
\newcommand{\ltopic}{k}
\newcommand{\ltime}{t}
\newcommand{\levent}{n}

\newcommand{\bevent}{\mathbf{e}}
\newcommand{\event}{e}
\newcommand{\assignment}{z_{\ltime, \levent}}
\newcommand{\observed}{y}
\newcommand{\Ktparam}{\Mtime}
\newcommand{\Kcparam}{\Mcatt}
\newcommand{\KernelFunc}{\kappa}
\newcommand{\Ktime}{\KernelFunc_{\Ktparam}}
\newcommand{\Kcatt}{\KernelFunc_{\Kcparam}}
\newcommand{\KernelMat}{\mathbf{K}}
\newcommand{\stamp}{\tau}
\newcommand{\startTime}{\ltime_s}

\newcommand{\dpersistency}{\alpha^{(\ldmode)}}

\newcommand{\Norder}{p}
\newcommand{\Gain}{\mathbf{R}}
\newcommand{\latmean}{\mathbf{m}}
\newcommand{\latcov}{\mathbf{P}}
\newcommand{\predmean}{\latmean^p}
\newcommand{\predcov}{\latcov^p}
\newcommand{\filtmean}{\latmean^f}
\newcommand{\filtcov}{\latcov^f}
\newcommand{\smoothmean}{\latmean^s}
\newcommand{\smoothcov}{\latcov^s}
\newcommand{\obsmean}{\mu}
\newcommand{\obsvar}{\sigma^2}
\newcommand{\invobsvar}{(\sigma^{2})^{-1}}
\newcommand{\polyagamma}{\omega}
\newcommand{\lineartrans}{\boldsymbol{\Phi}}
\newcommand{\Matobservation}{\mathbf{H}}
\newcommand{\pgmean}{(\obsmean_{\ltime,\ltopic})_{pg}}
\newcommand{\pgvar}{(\sigma^2_{\ltime,\ltopic})_{pg}}
\newcommand{\obsnoise}{\epsilon}
\newcommand{\noisevar}{\sigma_{noise}^2}

\newcommand{\mathN}{\mathbb{N}}
\newcommand{\shapeN}{\in\mathN}
\newcommand{\mathR}{\mathbb{R}}
\newcommand{\shapeR}{\in\mathR}

\newcommand{\method}{\textsc{HeteroComp}\xspace}
\newcommand{\streamName}{heterogeneous tensor stream\xspace}
\newcommand{\streamNames}{heterogeneous tensor streams\xspace}
\newcommand{\topic}{component\xspace}
\newcommand{\Topic}{Component\xspace}
\newcommand{\topics}{components\xspace}
\newcommand{\attribute}{attribute\xspace}
\newcommand{\Attribute}{Attribute\xspace}
\newcommand{\attributes}{attributes\xspace}
\newcommand{\Attributes}{Attributes\xspace}

\newcommand{\record}{record\xspace}
\newcommand{\records}{records\xspace}
\newcommand{\cmode}{continuous \attribute}
\newcommand{\Cmode}{Continuous \Attribute}
\newcommand{\cmodes}{continuous \attributes}
\newcommand{\Cmodes}{Continuous \Attributes}
\newcommand{\dmode}{categorical \attribute}
\newcommand{\Dmode}{Categorical \Attribute}
\newcommand{\dmodes}{categorical \attributes}
\newcommand{\Dmodes}{Categorical \Attributes}
\newcommand{\dcmode}{categorical and continuous \attribute}
\newcommand{\dcmodes}{categorical and continuous \attributes}
\newcommand{\anomscore}{\text{score}(\tensorC)}
\newcommand{\Parameters}{\Theta}
\newcommand{\Statistics}{\mathcal{S}}
\newcommand{\FullParameters}{\mathcal{F}}
\newcommand{\grid}{\Delta^{(\lcmode)}}
\newcommand{\gridx}{mean(\grid_g)}
\newcommand{\gridy}{c}
\newcommand{\region}{{V^{(\lcmode)}}}
\newcommand{\varc}{{\sigma^2_C}}
\newcommand{\interval}{\delta}
\newcommand{\sinterval}{\mathcal {T}}
\newcommand{\cinterval}{\delta_c}
\newcommand{\inovationMean}{r_\lgrid}
\newcommand{\inovationVar}{\sigma^2_{r_\lgrid}}


\newcommand{\ciseven}{\#1 \textit{CI'17}\xspace}
\newcommand{\pciseven}{(\#1) \textit{CI'17}\xspace}
\newcommand{\cieight}{\textit{\#2 CCI'18}\xspace}
\newcommand{\pcieight}{\textit{(\#2) CCI'18}\xspace}

\newcommand{\edge}{\textit{\#3 Edge-IIoT}\xspace}
\newcommand{\pedge}{\textit{(\#3) Edge-IIoT}\xspace}
\newcommand{\ddos}{\textit{\#4 DDos2019}\xspace}
\newcommand{\pddos}{\textit{(\#4) DDos2019}\xspace}
\newcommand{\cupid}{\textit{\#5 CUPID}\xspace}
\newcommand{\pcupid}{\textit{(\#5) CUPID}\xspace}
\newcommand{\amazon}{\textit{\#6 Amazon Movie\&TV}\xspace}
\newcommand{\pamazon}{\textit{(\#6) Amazon Movie\&TV}\xspace}

\newcommand{\roc}{\textit{AUC-ROC}\xspace}
\newcommand{\pr}{\textit{AUC-PR}\xspace}

\newcommand{\PaperTitle}{Multi-Aspect Mining and Anomaly Detection for\\ Heterogeneous Tensor Streams}
\title[{\let\\\relax \PaperTitle}]{\texorpdfstring{\PaperTitle}{}}

\author{Soshi Kakio}
\affiliation{%
  \institution{SANKEN, The University of Osaka}
  \city{Osaka}
  \country{Japan}
}
\email{skakio88@sanken.osaka-u.ac.jp}

\author{Yasuko Matsubara}
\affiliation{%
  \institution{SANKEN, The University of Osaka}
  \city{Osaka}
  \country{Japan}
}
\email{yasuko@sanken.osaka-u.ac.jp}

\author{Ren Fujiwara}
\affiliation{%
    \institution{SANKEN, The University of Osaka}
    \city{Osaka}
    \country{Japan}
}
\email{r-fujiwr88@sanken.osaka-u.ac.jp}

\author{Yasushi Sakurai}
\affiliation{%
  \institution{SANKEN,  The University of Osaka}
  \city{Osaka}
  \country{Japan}
}
\email{yasushi@sanken.osaka-u.ac.jp}

\renewcommand{\shortauthors}{Soshi Kakio,  Yasuko Matsubara, Ren Fujiwara, and Yasushi Sakurai}

\begin{abstract}
Analysis and anomaly detection in event tensor streams consisting of timestamps and multiple \attributes—such as communication logs (time, IP address, packet length)—are essential tasks in data mining.
While existing tensor decomposition and anomaly detection methods provide useful insights, they face the following two limitations. (i) They cannot handle \streamNames, which comprises both \dmodes(e.g., IP address) and \cmodes (e.g., packet length).
They typically require either discretizing \cmodes or treating \dmodes as continuous, both of which distort the  underlying statistical properties of the data. Furthermore, incorrect assumptions about the distribution family of \cmodes often degrade the model’s performance. (ii) They discretize timestamps, failing to track the temporal dynamics of streams (e.g., trends, abnormal events), which makes them ineffective for detecting anomalies at the group level, referred to as "group anomalies" (e.g, DoS attacks).
To address these challenges, we propose \method, a method for continuously summarizing \streamNames into "\topics" representing latent groups in each \attribute and their temporal dynamics, and detecting group anomalies. 
Our method employs Gaussian process priors to model unknown distributions of \cmodes, and  temporal dynamics, which directly estimate probability densities from data.
Extracted \topics give concise but effective summarization, enabling accurate group anomaly detection.
Extensive experiments on real datasets demonstrate that \method outperforms the state-of-the-art algorithms for group anomaly detection accuracy, and its computational time does not
depend on the data stream length.

\end{abstract}

\begin{CCSXML}
<ccs2012>
   <concept>
       <concept_id>10002951.10003227.10003351.10003446</concept_id>
       <concept_desc>Information systems~Data stream mining</concept_desc>
       <concept_significance>500</concept_significance>
    </concept>
   <concept>
       <concept_id>10010147.10010257.10010293.10010309</concept_id>
       <concept_desc>Computing methodologies~Factorization methods</concept_desc>
       <concept_significance>500</concept_significance>
       </concept>
 </ccs2012>
\end{CCSXML}

\ccsdesc[500]{Information systems~Data stream mining}
\ccsdesc[500]{Computing methodologies~Factorization methods}

\keywords{Bayesian tensor decomposition, Data stream, Anomaly detection, Gaussian process}


\maketitle


\section{Introduction}
    \label{section:introduction}
    \TSK{
    
\begin{figure*}[ht]
    \centering 
    \includegraphics[width=\textwidth]{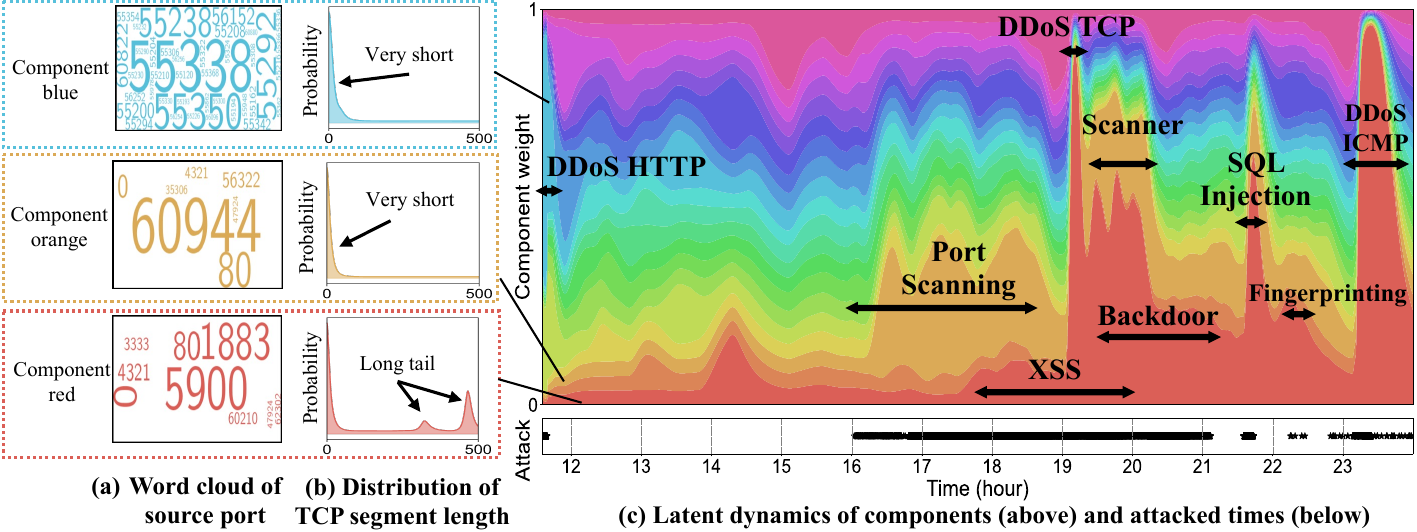}
    \caption{Modeling power of \method over \pedge dataset. Our proposed method can find the hidden \topics which represents different characteristics in both (a) \dmode (source port) and (b) \cmode (TCP segment length), and (c) \topic weight exhibit significant changes when cyber-attacks occurs.}
    \label{fig:preview}
    \Description{}
    \vspace{-10pt} 
\end{figure*}


}
The rapid development of information systems has made it possible to obtain a variety of multi-aspect event data streams, which consist of a timestamp and multiple \attributes (e.g., price, user ID, item name).
Crucially, the effective analysis and anomaly detection of such streams have numerous real-world applications \cite{CubeScope}, including online marketing analytics \cite{Trimine}, location-based services \cite{SMF,SSMF, DMPP}, and cybersecurity systems \cite{DenseAlert, MStream, MemStream, CyberCScope, Anograph}.
For example, marketers want to discover hidden groups of each \attribute and their trends from user's review data.
Furthermore, in cybersecurity systems, analyzing and detecting anomalies (e.g., DDoS attacks) from access logs as quickly as possible is crucial to minimize the damage from them.
Since there is no anomaly label in streaming settings, in this work, we focus on the analysis and unsupervised anomaly detection of multi-aspect data streams.
\par
These multi-aspect data streams are represented as high dimensional tensor streams, which are inherently sparse \cite{CubeScope},
where the number of present \records is much smaller than the tensor size.
Despite the wide success of existing tensor decomposition algorithms \cite{TensorSurvey}, which aim to reveal hidden structures and relationships of tensors,
this sparsity derails typical tensor decomposition methods because they are designed for dense tensors.
Recent studies focused on the streaming decomposition of sparse tensors \cite{OnlineSCP, SMF, SSMF, SliceNStich}, with an effective strategy being the \topics-based methods \cite{Trimine, CubeScope, CyberCScope} that capture hidden groups in each \attribute and their relationships.
However, most existing methods can handle only \dmodes (e.g., IP address, port, user ID), and cannot handle \cmodes  (e.g., price, flow duration, packet size).
Here, we refer to such tensor streams that consist of both \dmodes  and \cmodes as \textit{\streamName}.
To handle \streamNames, existing methods require discretizing \cmodes, which disrupts the continuity of \cmodes.
Furthermore, in real-world scenarios, we often have no information about the \cmodes; that is, we do not know the distributions of \cmodes.
Incorrect assumptions about the distribution family often degrade the model's performance.
\par
Typical streaming unsupervised anomaly detection methods \cite{IForestASD, RRCF, xStream} are effective at identifying anomalous \records (\textit{point anomalies}).
However, as they ignore the temporal relationships between \records, they cannot effectively detect \textit{group anomalies} (also referred to as collective anomalies), which may not be anomalies by themselves, but their occurrence together as a collection exhibits unusual patterns that deviate from the entire data set \cite{GraphSurvey}.
A canonical example of group anomalies is a DoS attack: an individual request could possibly be normal, but their high-frequency aggregation over a short duration constitutes a critical threat.
Although methods for detecting group anomalies \cite{Anograph, DenseAlert, AugSplicing} have been developed, they cannot handle \streamNames; they enforce data homogeneity by either discretizing \cmodes or treating \dmodes as continuous, both of which distort the underlying statistical properties of the data.
Furthermore, by discretizing timestamps, they fail to preserve their continuity.
In summary of the above discussion, we wish to solve the following problem:
\textit{Given a \streamName, how can we find hidden structures of the stream without restricting to any specific parameterized form of \cmodes, and quickly and accurately detect group anomalies?}
\par
In this paper, to tackle the above challenging problems, we propose \method
\footnote{Our source code is publicly available in \url{https://github.com/kaki005/HeteroComp}.},
for continuously summarizing \streamNames into \topics and their temporal dynamics, and detect group anomalies based on the \topics without restricting to any specific parameterized form.
Specifically, to model the unknown distribution of \cmodes, \method uses logistic Gaussian process priors \cite{LogGPPrior}, which directly estimate probability densities from data, thus \method can treat \cmodes uniformly.
In addition, \method models the \topics' latent dynamics using the Gaussian process, which utilizes the continuity of timestamps and captures complex temporal evolutions(changes) of \topics linked to external events.
Extracted \topics naturally represent latent groups in both \dcmodes, and relationship between \attributes, which provide an easy-to-understand summary of data.
Similar \records tend to cluster into the same \topic, which enables accurate  group anomaly detection by aggregating abnormal \records.
The framework further admits streaming updates without retraining from scratch, making it suitable for long-running deployments where \streamNames arrive continuously, and enabling low computational cost incorporation of new \records.

\subsection{Preview of Our Result}
Fig. \ref{fig:preview} shows an example of the analysis of \streamName (i.e., \pedge) using \method.
This dataset consists of \records with \dmodes, \cmodes, and timestamps.
Our method captures the following properties:
\begin{itemize}[leftmargin=*]
    \item \textbf{Modeling Heterogeneity}:
    Fig. \ref{fig:preview}(a)(b) shows the characteristics of three \topics blue, orange, and red.
    First, Fig. \ref{fig:preview}(a) shows the word clouds of source port \attribute.
    A larger size in the word cloud denote a stronger relationship with the \topic.
    \Topic blue is associated with \records sent from ports 55338 and 55350 whereas \topic orange is dominated by \records on port 60944, and \topic Red have \records sent from ports 5900 and 1883.
    Next, Fig. \ref{fig:preview}(b) shows the distribution of the length of the TCP segment \attribute.
    \Topic blue and orange encompass \records featuring short TCP segments, while \topic red primarily represents \records with long TCP segments.
    \item \textbf{Latent Dynamics}: Fig. \ref{fig:preview}(c) visualizes the latent dynamics of \topics, where the area of each color represents the  \topic assignment probability at each time.
    The dynamics of these latent \topics exhibit significant changes when cyber-attacks occurs.
    Specifically,  \topic blue becomes dominant over others when a DDoS HTTP attack occurs, while \topic orange becomes dominant during an attack originating from a different source port, such as Port Scanning and Vulnerability Scanner.
    Similarly, \topic red tends to increase during attacks characterized by the transmission of many packets with large TCP segment sizes, such as DDoS TCP, DDoS ICMP.
\end{itemize}

\myparaitemize{Contributions}
In this paper, we propose \method, which has the following desirable properties:
\begin{itemize}[leftmargin=*]
    \item \textit{Effective}: : Our proposed model summarizes \streamNames without restricting to any specific parameterized form, which extracts interpretable latent \topics and their latent dynamic (i.e., Fig. \ref{fig:preview}).
    \item \textit{Accurate}:  Extensive experiments on real-world datasets shows that \method outperforms baseline approaches for detecting group anomalies accurately.
    \item \textit{Scalable}:  Our proposed algorithm is fast and its computation time does not depend on the entire stream length.
\end{itemize}

\section{Related Work}
    \label{section:related_works}
    \TSK{
    \newcommand*\rot{\rotatebox{90}}
\newcommand*\OK{\ding{51}}
\newcommand*\NO{-}
\newcommand{\SOME}{some}

\begin{table}[t]
    \centering
    \caption{Capabilities of approaches.}
    \label{table: capability}
    \resizebox{1.0\linewidth}{!}{
        \begin{tabular}{l|cccc|ccc|c}
            \toprule
            &
            \rot{RRCF\cite{RRCF}}&
            \rot{MStream\cite{MStream}}&
            \rot{MemStream\cite{MemStream}}&
            \rot{Anograph\cite{Anograph}}&
            \rot{Trimine\cite{Trimine}} &
            \rot{CubeScope\cite{CubeScope}} &
            \rot{CyberCScope\cite{CyberCScope}} &
            \rot{\textbf{\method}} \\
            \midrule
            \rowcolor{lightgray}
            Anomaly detection  &\OK &\OK &\OK &\OK &\NO &\OK &\OK &\OK \\
            Multi-aspect mining &\NO &\NO &\NO &\NO &\OK &\OK &\OK &\OK \\
    
            \midrule
            \rowcolor{lightgray}
            Stream processing &\OK &\OK &\OK&\OK &\NO &\OK &\OK  &\OK \\
    
    
            Heterogeneous &\NO &\OK &\NO &\NO &\NO  &\NO &\OK &\OK \\
            
            \rowcolor{lightgray}
    
            Latent dynamics &\NO &\NO &\NO&\NO &\NO &\NO &\NO &\OK \\
    
            \bottomrule
        \end{tabular}
    }
\end{table}
}
In this section, we briefly describe investigations related to this research.
Table \ref{table: capability} shows the relative advantages of our method, and only \method meets all the requirements.

\myparaitemize{Sparse Tensor Decomposition}
A wide range of studies have been conducted on analyzing sparse tensors,
including probabilistic generative models\cite{BPTF, Trimine, Zhe2018-aw, Tillinghast2021-bc, Wang2020-lr, Wang2022-eh} and neural-based models\cite{NewuralAT}.
In particular, streaming algorithms have become more critical in terms of processing a substantial amount of data under time/memory limitations, and they have proved highly significant to the data mining and database community \cite{SliceNStich, OnlineSCP, POST, CPStream}.
CubeScope \cite{CubeScope} can summarize an event tensor stream interpretably, such as distinct patterns that change over time or major trends in \dmodes.
However, they can handle only \dmodes, so they enforce data homogeneity by discretizing \cmodes, which disrupts the continuity of \cmodes.
Only CyberCScope \cite{CyberCScope} can handle \streamNames by distinguishing between \dmodes and \cmodes, but it only supports the case where the \cmodes follow Gamma distributions, which may diminish \topic diversity.
Furthermore, it uses Multinomial distribution to model \topic evolution over time, failing to capture complex dynamics.

\myparaitemize{Streaming Anomaly Detection}
Over decades, popular anomaly detection methods have been extended to work online on a data stream\cite{NETS, MDUAL, ILOF, MILOF,DILOF},
including
One Class SVM \cite{OCSVM}, isolation forest \cite{IForestASD, RRCF}, and  deep learning methods \cite{ARCUS}.
However, they are designed for point anomaly detection, failing to detect group anomalies.
Although MemStream \cite{MemStream} can handle the time-varying data distribution known as concept drift \cite{conceptdriftsurveyIEEE2018} and robust to group anomalies, it cannot handle \dmodes.
Recent methods for group anomaly detection of multi-aspect data stream are mainly categorized into graph-based methods\cite{Midas, GraphSurvey, Anograph} which aim to detect node/edge/graph anomalies in graph streams, and tensor-based methods \cite{CrossSpot, DenseAlert, AugSplicing} which aim to detect suddenly appearing dense subtensors in sparse tensor streams.
However, they are designed for \dmodes, failing to handle \streamNames.
Although Mstream \cite{MStream} can detect group anomalies from multi-aspect data involving \dcmodes, it discretizes timestamps, which degrades detection accuracy.
Furthermore, they cannot summarize the streams interpretably.

\par
\myparaitemize{Deep Generative Models}
Although many deep generative models have been proposed to model heterogeneous tabular data\cite{HIVAE, CTGAN}
and non-parametric distributions \cite{NormalizingFlow},
they are computationally expensive for streaming scenarios.

\section{Proposed Method}
    \label{section:model}
\subsection{Problem Settings}
We continuously monitor a stream of \records $\{\bevent_1, \bevent_2,\dots\}$, arriving in a streaming manner.
Each \record $\bevent_i$ consists of timestamps $\stamp_i$, $\ndmode$ \dmodes $\{\event^{(\ldmode)}_i\}_{\ldmode=1}^{\ndmode}$, and $\ncmode$ \cmodes $ \{\event^{(\lcmode)}_i\}_{\lcmode=1}^{\ncmode}$ .
For the $\ldmode$-th \dmode, we assume a finite $\nunits_\ldmode$-th dimensional space, whereas for the $\lcmode$-th \cmode, we assume a real space $\mathbb R$.
This stream takes the form of a $(1+ \ndmode + \ncmode)$-th order tensor $\tensor\shapeN^{\ntime\times\tensorShape}$, where $\ntime$ is the number of timestamps up to the current time.
For each of the non-overlapping $\nctime \ll \ntime$ timestamps, we can obtain $\tensorC\shapeN^{\nctime\times\tensorShape}$ as the partial tensor of $\tensor$.
Our goal is to summarize continuously growing $\tensor$ and detect group anomalies by processing it incrementally as subtensor $\tensorC$.
\TSK{
    \begin{figure*}[t]
    \centering
    \includegraphics[width=\linewidth]{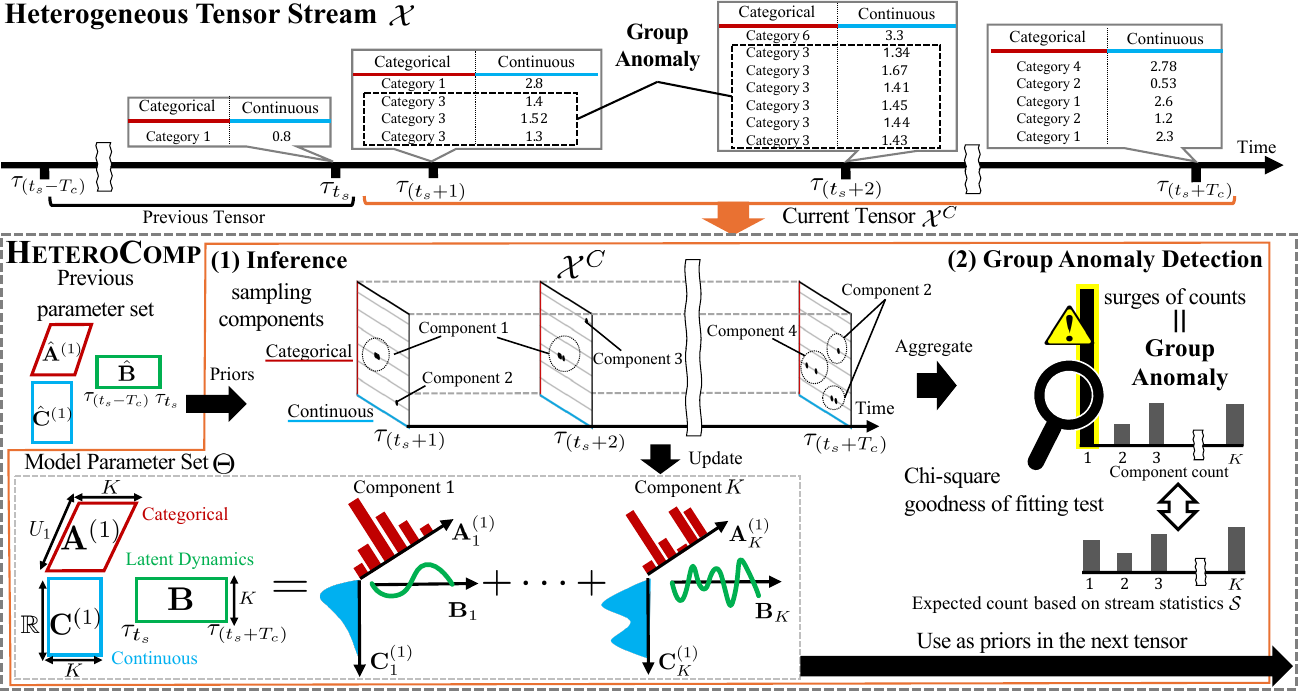}
    \caption{Illustration of \method: Given a current tensor $\tensorC$ consisting one \dmode and one \cmode, (1) it assigns \topics to each \record in $\tensorC$ and update model parameter $\Matt^{(1)}, \Mcatt^{(1)}, \Mtime$, (2) it quickly and accurately detects group anomalies based on \topics counts.}
    \Description{Figure showing how \method decompose the \streamNames.}
    \label{fig:overview}
    \vspace{-10pt} 

\end{figure*}
}
\subsection{Gaussian Process}
To model the latent dynamics of \topic and the distribution of \cmodes, we use Gaussian processes \cite{GPML}, which  are distributions over functions, fully specified by a kernel covariance  function $\KernelFunc$,  such that any finite set of function values is jointly Gaussian.
Specifically,  for a finite set of inputs $X=\{x_1,\dots,x_n\}$, the function values are distributed as
    \begin{equation}
      f\sim GP( 0, \KernelFunc) \overset{\mathrm{def}}{\Longleftrightarrow}
      (f(x_1), \dots, f(x_n)) \sim \text{Normal}(\mathbf 0, \mathbf
      K),
    \end{equation}
    where $\mathbf
    K_{i,j} = \KernelFunc(x_i, x_j)$ is a covariance matrix.
\subsection{Model}
We now present our model in detail.
We assume that there are $\ntopic$ major trends behind the event collections and refer to such trends as ``\topic''.
As shown in Fig. \ref{fig:overview},  the $\ltopic$-th \topic is characterized by the following probability distributions.
\bit[leftmargin=*]
    \item $\Matt_\ltopic^{(\ldmode)}\shapeR^{\nunits_{\ldmode}}$ : Multinomial distribution over the $\ldmode$-th \dmode, for the \topic $\ltopic$.
    \item $\Mtime_{\ltopic}\shapeR^{\nctime}$ : Latent dynamics for the \topic $\ltopic$.
        \begin{equation}
            \Mtime_\ltopic\sim GP( 0, \Ktime).
        \end{equation}
    \item $\Mcatt_\ltopic^{(\lcmode)}\shapeR^{\ngrid_\lcmode}$ : Distribution for $\lcmode$-th \cmode, the \topic $\ltopic$.
    To model an unknown distribution in $\R$,  we use a logistic Gaussian process (LGP) prior:
    \begin{align}
        \Mcatt_\ltopic^{(\lcmode)} &\sim GP( 0, \Kcatt),\\
            p_{LGP} (\event^{(\lcmode)}) &= \frac{
            \exp(\Mcatt_\ltopic^{(\lcmode)}(\event^{(\lcmode)}))
            }{\int_\mathbb{R} \exp(\Mcatt_\ltopic^{(\lcmode)}(e'))de'}.
        \label{eq_lgp}
    \end{align}
    We discretize $\mathbb R$ into $\ngrid_\lcmode$ non-overlapping grids  $\{\grid_{\lgrid}\}_{\lgrid=1}^{\ngrid_\lcmode}$ to improve the efficiency of inference.
    In other words, we use the following probability distribution instead of Eq.  (\ref{eq_lgp}) :
    \begin{align}
        p_{LGP} (\event^{(\lcmode)}\in \grid_{\lgrid}) &\approx \frac{|\grid_{\lgrid}|\exp(\gridy^{(\lcmode)}_{\ltopic, \lgrid})}{\sum_{g'=1}^{\ngrid_{\lcmode}}|\grid_{\lgrid'}|\exp(\gridy^{(\lcmode)}_{\ltopic,\lgrid'})},
        \label{eq_lgp_approx}
    \end{align}
    where $|\grid_{\lgrid}|$ is the width of the $\lgrid$-th grid, and $\gridy^{(\lcmode)}_{\ltopic,\lgrid}$ is a predicted value at the center point of $\grid_{\lgrid}$.
\eit
\begin{definition}[Model parameter set: $\Parameters$]
\label{def:model_parameters}
Let $\Parameters = \bigl\{\{\Matt^{(\ldmode)}\}_{\ldmode=1}^{\ndmode},\allowbreak \Mtime,\allowbreak \{\Mcatt^{(\lcmode)}\}_{\lcmode=1}^\ncmode\bigr \}$ be a parameter set of \method in  $\tensorC$.
\end{definition}
We also incorporate temporal dependencies into this model so that each parameter captures the context of its predecessors in the data stream.
Specifically, we assume that the means of $\Matt, \Mcatt$ are the same as $\Patt, \Pcatt$, which are the parameters estimated at the previous tensor, unless the newly arrived tensor $\tensorC$ are confirmed.
With this assumption, we can use $\text{Dirichlet}(\dpersistency\Patt_\ltopic)$ for \dmodes, and $\text{Normal}(\Pcatt_\ltopic^{(\lcmode)},  \varc I)$ for \cmodes.
$\dpersistency$ is a hyperparameter representing the temporal persistence of the $\ldmode$-th \dmode.
\footnote {We set $\dpersistency = \frac1\ntopic$ as default.}
Consequently, as shown in the graphical model in Fig. \ref{fig:graphical},  the generative process of $\tensorC$ can be described as follows:
\vspace{5pt}

{
    \hspace{-1em}
    \fbox{
        \begin{minipage}{0.95\columnwidth}
        \begin{itemize}[leftmargin=*]
            \item For each \topic $\ltopic = 1, ..., \ntopic$:
                \begin{itemize}
                    \item For each \dmode $\ldmode = 1, ..., \ndmode$:
                    \begin{itemize}
                        \item $\Matt_\ltopic^{(\ldmode)} \sim \text{Dirichlet}(\dpersistency\Patt_\ltopic^{(\ldmode)})$
                    \end{itemize}
                    \item For each \cmode $\lcmode = 1, ..., \ncmode$:
                    \begin{itemize}
                        \item $\Mcatt_\ltopic^{(\lcmode)} \sim \text{Normal}\Big(\Pcatt_\ltopic^{(\lcmode)},  \varc I\Big)$
        \end{itemize}
                    \end{itemize}
                \end{itemize}
                 \begin{itemize}[leftmargin=*]
                     \item For each time $\ltime = \startTime+1, ..., \startTime+\nctime$:
                \begin{itemize}
                    \item For each \record $\levent = 1, ..., \nevent$:
                    \begin{itemize}
                        \item $\assignment \sim \text{Categorical}\left(\text{softmax}(\Mtime(\stamp_\ltime))\right)${\color{black}\ //  \Topic. }
                        \item For each \dmode $\ldmode = 1, ..., \ndmode$:
                        \begin{itemize}
                            \item $\event^{(\ldmode)}_{\ltime, \levent} \sim \text{Categorical}\left(\Matt_{\assignment}^{(\ldmode)}\right)$
                        \end{itemize}
                        \item For each \cmode $\lcmode = 1, ..., \ncmode$:
        
                        \begin{itemize}
                            \item $\event_{t, \levent}^{(\lcmode)} \sim  p_{LGP}(\event^{(\lcmode)}_{t, \levent} |\Mcatt_{\assignment}^{(\lcmode)})$
                            {\color{black}\ // Eq.(\ref{eq_lgp_approx})}
            \end{itemize}
                        \end{itemize}
                    \end{itemize}
        
            \end{itemize}
    \end{minipage}
    }
}

\\
where $\nevent$ is the total number of \records at time $\stamp_\ltime$, and  $\assignment$ is the \topic assignment.
We note that the benefits of this model are three-fold.
First, our model can summarize arbitrary-order heterogeneous sparse tensors into $\ntopic$ \topics.
Second, our model employs Gaussian process for $\Mtime$ and $\Mcatt$,
enabling unified handling of \cmodes and capturing complex dynamics of \topics.
Lastly, to capture temporal dependencies, it employs the parameters of the previous tensor rather than storing tensors.
\begin{figure}[t]
    \centering
    \includegraphics[width=\linewidth]{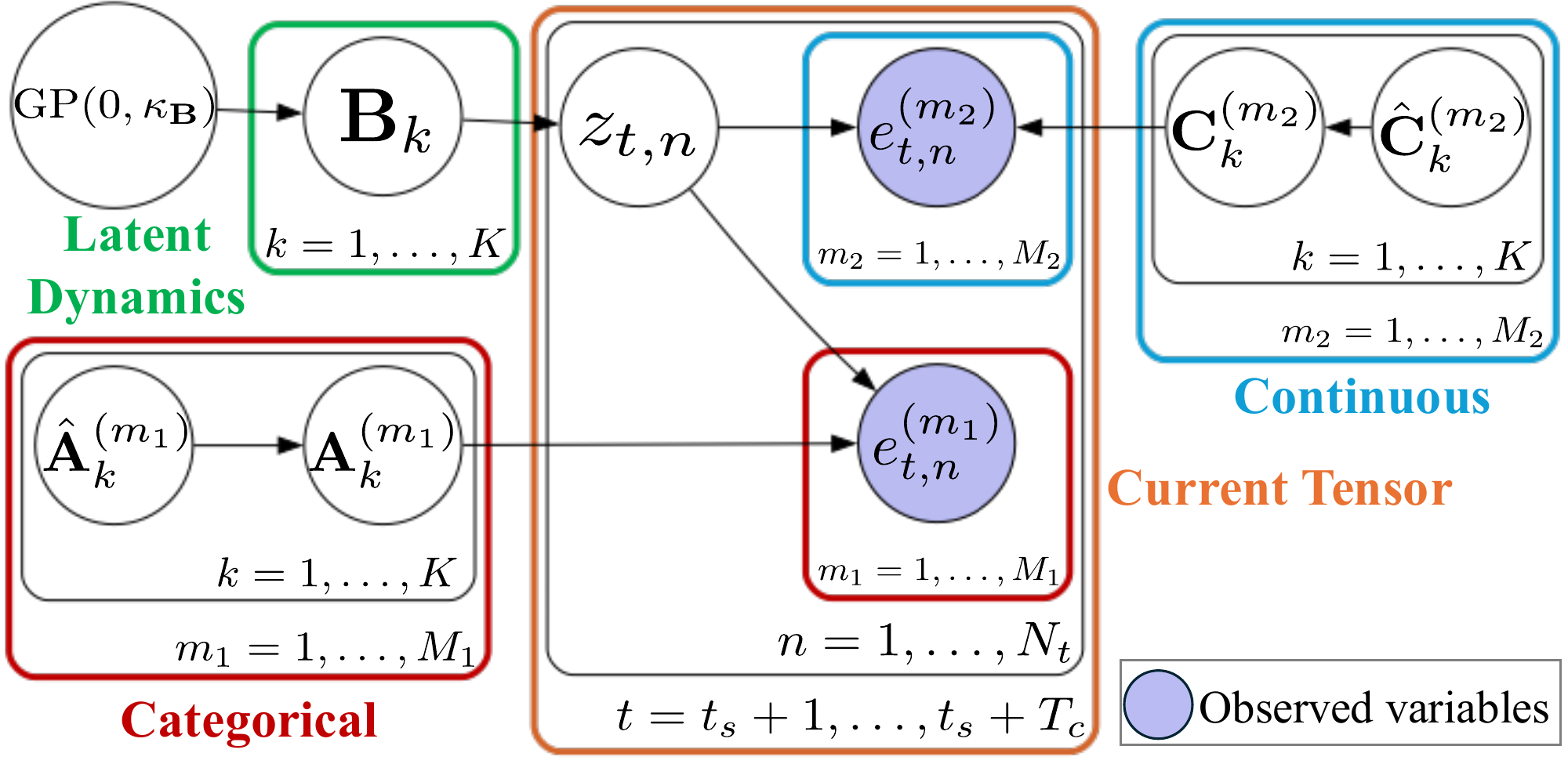}
    \caption{Graphical model of \method.}
    \Description{Figure showing the graphical model of \method.}
    \label{fig:graphical}
\end{figure}

\section{Algorithm}
    Before turning to the main topic, we introduce statistics for anomaly detection.
\begin{definition}[Stream Statistics: $\Statistics $]
\label{def:stream_statistic}
Let $\Statistics =\bigl\{\sinterval,  \Stopic,\\ \{\Stopicmode\}_{\ldmode=1}^{\ndmode}, \{\Stopicgrid\}_{\lcmode=1}^\ncmode\bigr\}$ be a statistics set of the entire stream $\tensor$.
$\sinterval$  is the total normal time, and $\Stopic\in \N^\ntopic$ is the total \topic count vector of normal \records.
$\Stopicmode \shapeN^{\ntopic \times \nunits_\ldmode}$ and $\Stopicgrid \shapeN^{\ntopic \times \ngrid_\lcmode}$ denote the total \topic{}–unit and \topic{}–grid count matrices of normal records, respectively.
\end{definition}
With the above definitions, the formal problem is as follows:
\begin{problem}
\label{prob1}
\textbf {Given} the current tensor  $\tensorC$ as a partial tensor of $\tensor$,
\bit
    \item \textbf{Estimate} the model parameter set $\Parameters$.
    \item \textbf {Maintain} the stream statistics $\Statistics$ of $\tensor$,
    \item \textbf{Report} the anomaly score for $\tensorC$,
\eit
incrementally and quickly, at any point in time.
\end{problem}
In this section, we present practical algorithms for solving Problem \ref{prob1}.
Algorithm \ref{alg:main} shows the overall procedure.
Specifically, we first infer the \topic assignments and model parameter set $\Parameters$ from $\tensorC$, and then calculate the anomaly score of $\tensorC$.

\subsection{Inference}
\TSK{
    
\begin{algorithm}[t]
    \renewcommand{\algorithmicrequire}{\textbf{Input:}}
    \renewcommand{\algorithmicensure}{\textbf{Output:}}
    \small
    \caption{\method $(\tensorC, \Parameters)$}
    \label{alg:main}
    \begin{algorithmic} [1]
        \REQUIRE{\begin{tabular}[t]{@{}l@{\,}l} 
              1. & Current tensor: $\tensorC\shapeN^{\nctime \times \tensorShape}$ \\
              2. & Previous model parameter set: $\hat\Parameters$\\
              3. & Previous stream statistics: $\Statistics$
            \end{tabular}   }
        \ENSURE {\begin{tabular}[t]{@{}l@{\,}l} 
              1. &Updated model parameter set: $\Parameters$\\
              2. &Updated stream statistics: $\Statistics$\\
              3. &Anomaly score: $\anomscore$
            \end{tabular}
            }
        \STATE {\color{blue} /* Inference */} 
        \FOR {{\bf each} iteration}
            \FOR {{\bf each} \record in $\tensorC$}
                \STATE Draw \topic $\assignment$  {\color{black}// Eq.~(\ref{eq:topic_sample})}
            \ENDFOR
            \FOR {$\ltopic = 1, ..., \ntopic$}
                \FOR {$\ltime = \startTime + 1, ..., \startTime+\nctime$}
                    \STATE Draw polya gamma $\polyagamma_{\ltopic,\ltime}$ {\color{black}// Eq.~(\ref{eq:pg_sample})}
                    \STATE Caluculate $\pgmean, \pgvar$
                    {\color{black}// Eq.(\ref{eq:pg_update_sigma}),(\ref{eq:pg_update_mu})}
                \ENDFOR
                \STATE Estimate $\Mtime_{\ltopic}$
                {\color{black}// Eq.\eqref{eq:forward_predmean}  - \eqref{eq:backward_smoothcov}}
            \ENDFOR
        \ENDFOR
        \STATE Estimate $\{\Matt^{(\ldmode)}\}_{\ldmode=1}^{\ndmode}, \{\Mcatt^{(\lcmode)}\}_{\lcmode=1}^\ncmode$ {\color{black}// Eq. \eqref{eq:update_Mactt}, \eqref{eq:update_Matt} }
        \STATE $\Parameters\leftarrow (\{\Matt^{(\ldmode)}\}_{\ldmode=1}^{\ndmode}, \Mtime, \{\Mcatt^{(\lcmode)}\}_{\lcmode=1}^\ncmode )$
        \STATE {\color{blue} /* Group Anomaly Detection */} 
        \STATE Calculate $\anomscore$ and p-value.{\color{black}// Eq. \eqref{eq:anomscore}}
        \IF {p-value $< 0.05$ \color{black}} 
            \STATE \textbf{Report} anomaly
        \ELSE  
            \STATE Add $\cinterval, \Ntopic, \{\Ntopicmode\}_{\ldmode=1}^\ndmode, \{\Ntopicgrid\}_{\lcmode=1}^\ncmode$ to $\Statistics$
        \ENDIF\\
        \RETURN  {$\Parameters$, $\Statistics$, $\anomscore$}
    \end{algorithmic}
    \normalsize
\end{algorithm}

}
According to the generative process, we efficiently estimate parameters by employing collapsed Gibbs sampling \cite{CGS}.
Specifically, we repeatedly estimate $\Mtime$ and sample \topics of each \record, and then estimate $\Matt$ and $\Mcatt$ after the \topics have  converged.

\myparaitemize{Sampling \topics}  We sample \topics for each \record in $\tensorC$ according to the following probability:
{
    \begin{align}
     &p(\assignment=\ltopic| \cdot) \propto \text{softmax} \Big(\Mtime(\stamp_\levent)\Big)_\ltopic \times\notag\\
    &\prod_{\ldmode=1}^\ndmode\frac{
    \Ntopicmode[\ltopic, \event_{\ltime,\levent}^{(\ldmode)}]{}'
    + \dpersistency\Patt_{\ltopic, \event_{\ltime, \levent}^{(\ldmode)}}
    }{\Ntopic[\ltopic]'+ \dpersistency}
    \prod_{\lcmode=1}^\ncmode p_{LGP} (\event_{\ltime,\levent}^{(\lcmode)} |\Mcatt_\ltopic^{(\lcmode)}),
    \label{eq:topic_sample}
    \end{align}
}
where $\Ntopic[\ltopic]$ is the number of \records assigned to \topic $\ltopic$ and $\Ntopicmode[\ltopic,\lunit]$ is the total counts \topic $\ltopic$ is assigned to the $\lunit$-th unit.
The prime (e.g., $\Ntopic[\ltopic]{}'$) indicates the count yielded by excluding the \record $\event_{\ltime,\levent}$.

\myparaitemize{Estimate $\Mtime$}
After sampling \topics, we estimate the posterior distribution of $\Mtime$ based on the sampled \topics.
However, we cannot directly use Gibbs sampling because a softmax function is non-conjugate. To address this, we used the Polya-Gamma data augmentation trick \cite{Polson2013-sf} to conjugate the softmax function.
First, for each \topic and for each time, we sample $\polyagamma_{\ltime,\ltopic}$  according to the following equation:
\begin{align}
    \label{eq:pg_sample}
    \polyagamma_{\ltime,\ltopic} &\sim PolyaGamma(\Ntimetopic, \obsmean_{\ltime,\ltopic}-\xi_{\ltime,\ltopic}), \\
        \xi_{\ltime\ltopic} &= \log \sum_{j\ne, \ltopic}e^{\obsmean_{j, \ltime}},
\end{align}
where $\obsmean_{\ltime,\ltopic}$ is prior mean of $\Mtime_\ltopic(\stamp_\ltime)$ and $\Ntimetopic$ is the number of \records assigned to the \topic $\ltopic$ at time $\stamp_\ltime$.
Then, we compute posterior mean $\pgmean$ and variance $\pgvar$.
\begin{align}
    \label{eq:pg_update_sigma}
    \pgvar &= \Big((\obsvar_{\ltime,\ltopic})^{-1} + \polyagamma_{\ltime,\ltopic}\Big)^{-1},
    \\
    \label{eq:pg_update_mu}
    \pgmean &= \pgvar\Big((\obsvar_{\ltime,\ltopic})^{-1} \obsmean_{\ltime,\ltopic}+ \Ntimetopic - \frac{\Ntime}2 +\polyagamma_{\ltime,\ltopic}\xi_{\ltime,\ltopic}\Big),
\end{align}
where $\obsvar_{\ltime,\ltopic}$ is the prior variance of $\Mtime_\ltopic(\stamp_\ltime)$.
\par
Next, we perform Gaussian process regression for $\{\pgmean\}_{\ltime=\startTime+1}^{\startTime+\nctime}$ to estimate the posterior of $\Mtime_\ltopic$.
However, naive computation takes $O(\nctime^3)$ time, so we approximate $\Mtime_\ltopic$ as a linear time-invariant stochastic differential equation (LTI-SDE):
{
\small
\setlength{\abovedisplayskip}{5pt}
 \setlength{\belowdisplayskip}{5pt}
\begin{align}
    \mathbf{x}_\ltopic(t) &= \Big(\Mtime_\ltopic(t), \frac{d \Mtime_\ltopic(t)}{dt}, ..., \frac{d^\Norder \Mtime_\ltopic(t)}{dt^\Norder}\Big), \\
    \Mtime_\ltopic(t) &= \Matobservation \mathbf{x}_\ltopic(t) + \obsnoise_\ltime \quad ( \obsnoise_\ltime \sim \text{Normal}(0, \noisevar)),\\
    \frac{d \mathbf{x}_{\ltopic}(t)}{dt} &= \mathbf{F}\mathbf{x}_\ltopic(t) + \mathbf{L}\mathbf{w}(t),
    \label{eq:sde}
\end{align}
\normalsize
}
where $\Norder$ is the order of the derivative, $\mathbf{w}(t)\shapeR^{s}$ is a multivariate white noise process with a spectral density matrix $\mathbf{Q}_{noise}\shapeR^{s\times s}$, and $\mathbf{F}\in\R^{(\Norder+1)\times (\Norder+1)}, L\in\R^{(\Norder+1)\times s},\Matobservation\shapeR^{1\times(\Norder+1)}$ is a feedback, a noise effect, and an observation matrix.
Many covariance functions
can be expressed as (\ref{eq:sde}) equivalently or approximately \cite{Sarkka2019-zm}.
For discrete values, this translates into
\begin{align}
    \mathbf{x}_{\ltopic}(\stamp_{\ltime}) &= \lineartrans_{\ltime-1} \mathbf{x}_{\ltopic}(\stamp_{\ltime-1}) + \mathbf{q}_{\ltime-1},\quad \mathbf{q}_{\ltime-1}\sim \text{Normal}( \mathbf 0, \mathbf{Q}_{\ltime-1}),
    \label{eq:ssm}
    \\\nonumber
    \lineartrans_{\ltime-1} &= e^{\mathbf{F}(\stamp_{\ltime}-\stamp_{\ltime-1})},
    \qquad
    \mathbf{Q}_{\ltime-1}
    = \mathbf{P}_{\infty} - \lineartrans_{\ltime-1}\mathbf{P}_{\infty} \lineartrans_{\ltime-1}^T.
\end{align}
The initial state is distributed according to
$\mathbf{x}_{\ltopic}(\stamp_{\startTime})\sim \text{Normal}\allowbreak(\smoothmean_{\ltopic, \startTime}, \allowbreak\smoothcov_{\ltopic, \startTime})$, where $\smoothmean_{\ltopic, \startTime}, \smoothcov_{\ltopic, \startTime}$ are the  posterior parameters in the previous tensor, computed according to Eq. (\ref{eq:backward_smoothmean}),  (\ref{eq:backward_smoothcov}) \footnote{We employ $\filtmean_{\ltopic, \startTime}=\mathbf{0}$ and $\filtcov_{\ltopic, \startTime} = \mathbf{P}_0$ at the first tensor.}.
Stationary covariance  $\mathbf{P}_\infty$ can be found by solving the Lyapunov equation:
\begin{align}
    \mathbf{F}\mathbf{P}_{\infty} + \mathbf{P}_{\infty} \mathbf{F}^T + \mathbf{L}\mathbf{Q}_{noise} \mathbf{L}^T = \mathbf 0.
\label{eq:riccati}
\end{align}
By approximating the Gaussian process regression as in Eq.  (\ref{eq:ssm}), the regression problem can be solved with $O(\nctime\Norder^3)$ time complexity and $O(\nctime\Norder^2)$ memory complexity using a Kalman filter and a Rauch-Tung-Striebel (RTS) smoother.
Specifically, the forward filtering update formula is obtained as follows:
\begin{align}
    \label{eq:forward_predmean}
    \predmean_{\ltopic,\ltime}&=\lineartrans_{\ltime-1}\filtmean_{\ltopic, \ltime-1},  \\
    \label{eq:forward_predcov}
    \predcov_{\ltopic, \ltime}&=\lineartrans_{\ltime-1}\filtcov_{\ltopic,\ltime-1} \lineartrans_{\ltime-1}^T + \mathbf{Q}_{\ltime-1},\\
    \label{eq:forward_gain}
    \Gain_{\ltopic,\ltime} &= \predcov_{\ltopic, \ltime} \Matobservation^T \pgvar^{-1},\\
    \label{eq:forward_filtmean}
    \filtmean_{\ltopic,\ltime} &= \predmean_{\ltopic,\ltime}+ \Gain_{\ltopic,\ltime}(\pgmean - \Matobservation\predmean_{\ltopic,\ltime}),\\
    \label{eq:forward_filtcov}
    \filtcov_{\ltopic,\ltime} &= (\mathbf{I}- \Gain_{\ltopic,\ltime})\predcov_{\ltopic,\ltime},
\end{align}
where $\predmean_{\ltopic, \ltime}, \predcov_{\ltopic, \ltime}$ is a predicted mean and covariance of $\mathbf{x}_\ltopic(\stamp_\ltime)$.
Furthermore, the backward smoothing update formula is also obtained as follows:
\begin{align}
    \mathbf{J}_{\ltopic,\ltime}&=   \filtcov_{\ltopic,\ltime} \lineartrans_{\ltime-1}^T(\predcov_{\ltopic,\ltime+1})^{-1},
    \label{eq:backward_gain}\\
    \smoothmean_{\ltopic,\ltime} &= \filtmean_{\ltopic,\ltime} +\mathbf{J}_{\ltopic,\ltime} (\smoothmean_{\ltopic,\ltime+1} - \lineartrans_{\ltime-1}\filtmean_{\ltopic,\ltime}),
    \label{eq:backward_smoothmean}\\
    \smoothcov_{\ltopic,\ltime} &=\filtcov_{\ltopic,\ltime} + \mathbf{J}_{\ltopic,\ltime} (   \smoothcov_{\ltopic,\ltime+1} - \predcov_{\ltopic,\ltime}  )\mathbf{J}_{\ltopic,\ltime}^T
    \label{eq:backward_smoothcov},
\end{align}
where $\smoothmean_{\ltopic, \ltime}, \smoothcov_{\ltopic, \ltime}$ is a smoothed mean and covariance of $\mathbf{x}_\ltopic(\stamp_\ltime)$.
To summarize, we use $\text{Normal}(\predmean_{\ltopic,\ltime}, \predcov_{\ltopic,\ltime})$ as the distribution for $\textbf{x}_\ltopic(\stamp_\ltime)$ in the first epoch, and $\text{Normal}(\smoothmean_{\ltopic,\ltime}, \smoothcov_{\ltopic,\ltime})$ in subsequent epochs.

\myparaitemize{Estimate $\Matt$}
After Gibbs sampling has burned in, we compute the posterior of $\Matt$.
Because the Dirichlet distribution is conjugate to the Categorical distribution, we can compute $\Matt$ analytically, as follows:
\begin{equation}
    \label{eq:update_Matt}
    \Matt^{(\ldmode)}_{\ltopic, \lunit}=\frac{\Ntopicmode[\ltopic,\lunit] + \dpersistency\Patt^{(\ldmode)}_{\ltopic,\lunit}}{\Ntopic[\ltopic] + \dpersistency},
\end{equation}
where $\Ntopicmode[\ltopic,\lunit]$ is the total count of \topic $\ltopic$ that is assigned to the $\lunit$-th unit in the $\ldmode$-th \dmode.

\myparaitemize{Estimate $\Mcatt$}
After Gibbs sampling has burned in, we compute the posterior of $\Mcatt$.
For convenience, we denote $|\grid_\lgrid|$ as $w_\lgrid$.
First, we approximate $\Mcatt$ as LTI-SDE, similar to $\Mtime$.
Next, to estimate the unknown density, we use MAP estimation, which aims to find $\mathbf{\gridy}_{\ltopic}$ that maximizes the following log-likelihood:
{
    \begin{align}
        \nonumber
        L(\mathbf{\gridy}_{\ltopic})=& \sum_{\lgrid=1}^{\ngrid_\lcmode} \Ntopicgrid[\ltopic,\lgrid] \Big(\log w_\lgrid + \gridy_{\ltopic, \lgrid}\Big)
        \label{eq:log_llh_lgp}
        - \Ntopic[\ltopic] \log \Big(\sum_{\lgrid=1}^{\ngrid_\lcmode}w_\lgrid \exp(\gridy_{\ltopic, \lgrid})  \Big)\notag\\
        &- \frac{1}{2}\sum_{\lgrid=1}^{\ngrid_\lcmode} \Big(\log |2\pi \inovationVar| + \inovationMean(\inovationVar)^{-1}\inovationMean\Big),
    \end{align}
}
where $\Ntopicgrid[\ltopic,\lgrid]$ is the total count of \topic $\ltopic$ that is assigned to the $\lgrid$-th grid in the $\lcmode$-th \cmode, and
$\inovationMean = \gridy_{\ltopic, \lgrid}- \Matobservation \predmean_{\ltopic, \lgrid}$, $\inovationVar =\Matobservation \predcov_{\ltopic, \lgrid}\Matobservation^T + \noisevar$ is an innovation mean and its variance.
The $\lgrid$-th element of the gradient is as follows:
\begin{equation}
    \label{eq:update_Mactt}
    \Bigl(\nabla L(\mathbf{\gridy}_{\ltopic})\Big)_\lgrid= \Ntopicgrid[\ltopic,\lgrid] - \Ntopic[\ltopic]\frac{w_\lgrid \exp(\gridy_{\ltopic, \lgrid})}{\sum_{\lgrid'=1}^{\ngrid_\lcmode}w_{\lgrid'} \exp(\gridy_{\ltopic, \lgrid'})} -(\inovationVar)^{-1}\inovationMean.
\end{equation}
We can estimate $\mathbf{\gridy}_{\ltopic}$ using the L-BFGS method\cite{LBFGS} because Eq.  (\ref{eq:log_llh_lgp}) has a unique maximum.

\TSK{
    \begin{figure*}
    \centering
    \includegraphics[width=\linewidth]{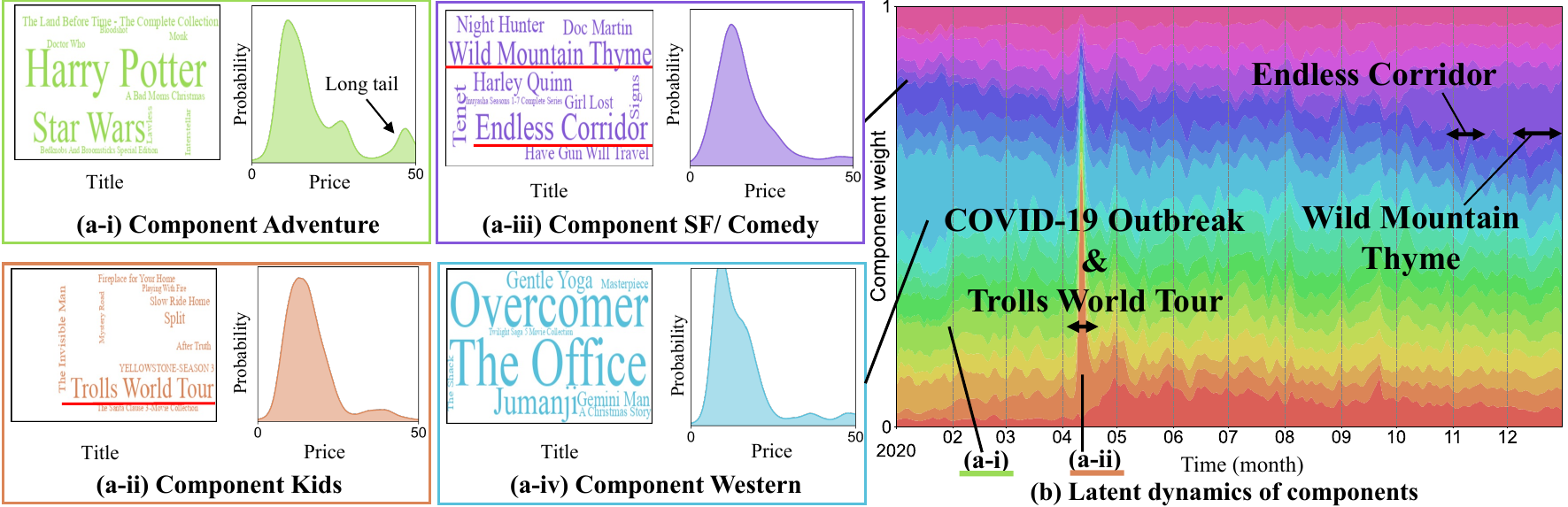}
    \caption{Market analysis of \method in the \amazon dataset. (a) The characteristics of four \topics (Adventure, Kids, SF/Comedy, Western) in \dmode (Title) and \cmode (price in US dollars). 
    (b) Component weight exhibit significant changes in relation to the film’s release.}
    \Description{Modeling power of \method.}
    \label{fig:amazon_movie}
\vspace{-5pt} 
\end{figure*}
}

\subsection{Group Anomaly Detection}
We exploit inferred \topics to calculate an anomaly score of $\tensorC$.
Our goal is to detect group anomalies, so we treat a sudden surge in a \topic's occurrence or an abrupt increase in an \attribute's count within a \topic as a group anomaly.
Therefore, we execute the chi-squared goodness-of-fit test.

\begin{description}
    \item [$\mathbf{H_0}$:]
    The null hypothesis assumes that the average of $\Ntopic$, $\Ntopicmode$,  $\Ntopicgrid$ in the current tensor is the same as the average of them in all previous normal times.
    \begin{align}
        \label{eq:expected_count_topic}
        \mathbb{E}[\Ntopic] &= (\Ntopic+ \Stopic)\frac{\cinterval}{\sinterval+ \cinterval},\\
        \mathbb{E}[\Ntopicmode] &= (\Ntopicmode+ \Stopicmode)\frac{\cinterval}{\sinterval+ \cinterval},\\
        \label{eq:expected_count_topicgrid}
        \mathbb{E}[\Ntopicgrid] &= (\Ntopicgrid+ \Stopicgrid)\frac{\cinterval}{\sinterval+ \cinterval},
    \end{align}
    where $\cinterval= \stamp_{(\startTime+\nctime)}-\stamp_{\startTime}$ is the interval of the current tensor.
    \item [$\mathbf{H_1}$:]The alternative hypothesis assumes that at least one of $\Ntopic$, $\Ntopicmode$, $\Ntopicgrid$ contains anomalies, and observed counts show a statistically significant difference from the expected counts, namely, Eq.  (\ref{eq:expected_count_topic}) - (\ref{eq:expected_count_topicgrid}).

\end{description}

Using the chi-squared statistic, $\chi_m^2(\mathbf{x}, M) = \sum_{m=1}^M (\mathbf{x}_m - \mathbb{E}[\mathbf{x}_m])^2 \allowbreak/ \mathbb{E}[\mathbf{x}_m]$, we define the anomaly score as follows:
\begin{align}
    \label{eq:anomscore}
    \anomscore &= \chi_\ltopic^2(\Ntopic, \ntopic)\notag+ \sum_{\ldmode=1}^\ndmode\sum_{\ltopic=1}^\ntopic\chi_\lunit^2(\Ntopicmode[\ltopic, \cdot], \nunits_\ldmode)\\
    &+\sum_{\lcmode=1}^\ncmode\sum_{\ltopic=1}^\ntopic\chi_\lgrid^2(\Ntopicgrid[\ltopic,\cdot] , \ngrid_\lcmode).
\end{align}
\begin{lemma}[Proof in Appendix \ref{sec:proof_anomaly}]
    \label{lemma_anomaly}
    $\anomscore$ follows a chi-squared distribution with $\ntopic (\sum_{\ldmode}^\ndmode\nunits_\ldmode + \sum_{\lcmode}^\ncmode\ngrid_\lcmode-\ndmode-\ncmode+1)-1$ degrees of freedom.
\end{lemma}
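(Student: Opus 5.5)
We argue asymptotically under $\mathbf{H_0}$, in the style of Pearson's goodness-of-fit theorem. We treat each count in the current tensor as Poisson with mean proportional to $\cinterval$, with rates estimated from the pooled data; this is the same assumption that yields the expected counts in Eq.~(\ref{eq:expected_count_topic})--(\ref{eq:expected_count_topicgrid}). The idea is to decompose $\anomscore$ into independent blocks, show each block is asymptotically chi-squared, and add up the degrees of freedom.

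\textbf{Step 1: the component block.} Condition on the total number of records $N=\sum_\ltopic \Ntopic[\ltopic]$ in $\tensorC$. Under $\mathbf{H_0}$ the vector $\Ntopic$ is then multinomial over $\ntopic$ cells with probabilities given by the normal-time proportions. Pearson's theorem gives $\chi_\ltopic^2(\Ntopic,\ntopic)\to\chi^2_{\ntopic-1}$, since one constraint (the fixed total) is lost.

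\textbf{Step 2: the unit and grid blocks.} Condition further on $\Ntopic[\ltopic]$. Every record assigned to component $\ltopic$ contributes exactly one count to each attribute, so $\sum_\lunit \Ntopicmode[\ltopic,\lunit]=\Ntopic[\ltopic]$ and $\sum_\lgrid \Ntopicgrid[\ltopic,\lgrid]=\Ntopic[\ltopic]$. Given $\Ntopic[\ltopic]$, each row $\Ntopicmode[\ltopic,\cdot]$ is multinomial over $\nunits_\ldmode$ cells with probabilities $\Matt^{(\ldmode)}_\ltopic$, and each row $\Ntopicgrid[\ltopic,\cdot]$ is multinomial over $\ngrid_\lcmode$ cells with probabilities from Eq.~(\ref{eq_lgp_approx}). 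Pearson's theorem then gives $\chi^2_{\nunits_\ldmode-1}$ and $\chi^2_{\ngrid_\lcmode-1}$ limits respectively.

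\textbf{Step 3: independence and summing.} We need the blocks to be asymptotically independent. In the generative model, given $\assignment$, the attribute values are drawn independently across attributes. The limiting laws in Step 2 do not depend on the conditioning values, so conditional independence lifts to unconditional asymptotic independence, by the standard argument for nested Pearson statistics. Summing the degrees of freedom gives
\begin{equation*}
(\ntopic-1)+\ntopic\Big(\sum_{\ldmode=1}^{\ndmode}(\nunits_\ldmode-1)+\sum_{\lcmode=1}^{\ncmode}(\ngrid_\lcmode-1)\Big)
=\ntopic\Big(\sum_{\ldmode=1}^{\ndmode}\nunits_\ldmode+\sum_{\lcmode=1}^{\ncmode}\ngrid_\lcmode-\ndmode-\ncmode+1\Big)-1,
\end{equation*}
which is the count claimed in Lemma~\ref{lemma_anomaly}.

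\textbf{Main obstacle.} The expected counts are estimated from pooled data that include the current tensor, which could in principle cost extra degrees of freedom. We handle this by noting that $\sinterval\gg\cinterval$ in the stream, so the pooled estimates converge to the true rates and no further degrees of freedom are lost. The component assignments themselves come from Gibbs sampling and are not truly independent draws; we treat them as observations from the generative model. The result is an asymptotic statement, which is the sense in which the lemma holds.
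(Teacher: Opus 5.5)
Your decomposition and degree-of-freedom count are the same as the paper's appendix proof: $\ntopic-1$ for the component block, $\nunits_\ldmode-1$ or $\ngrid_\lcmode-1$ for each component row, then independence and additivity. You justify each block more carefully than the paper does. The justification, however, does not fit the score as defined.

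Pearson's theorem removes one degree of freedom because the expected counts are the observed total times the cell probabilities, so the deviations sum to zero. In Eqs.~(\ref{eq:expected_count_topic})--(\ref{eq:expected_count_topicgrid}) the expected counts are instead rescaled by elapsed time, $\cinterval/(\sinterval+\cinterval)$. With $N=\sum_\ltopic\Ntopic[\ltopic]$, one has $\sum_\ltopic\mathbb{E}[\Ntopic[\ltopic]]\neq N$ in general. Likewise, the expectations in row $\ltopic$ sum to $\mathbb{E}[\Ntopic[\ltopic]]$ rather than to $\Ntopic[\ltopic]$. Conditioning on $N$ (your Step 1) or on $\Ntopic[\ltopic]$ (your Step 2) does not change these expected counts, so the conditioned statistics are not Pearson statistics.

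Concretely, take your own Poisson model with $\sinterval\gg\cinterval$. The expected counts then tend to $\mu_\ltopic=\lambda_\ltopic\cinterval$ and $\mu_\ltopic\Matt^{(\ldmode)}_{\ltopic,\lunit}$. Write $Z_\ltopic=(\Ntopic[\ltopic]-\mu_\ltopic)/\sqrt{\mu_\ltopic}$. The component block is $\sum_\ltopic Z_\ltopic^2\to\chi^2_{\ntopic}$, since the cells are independent Poisson counts with no constraint. Each row obeys the exact identity
\[
\sum_{\lunit=1}^{\nunits_\ldmode}\frac{\bigl(\Ntopicmode[\ltopic,\lunit]-\mu_\ltopic\Matt^{(\ldmode)}_{\ltopic,\lunit}\bigr)^2}{\mu_\ltopic\Matt^{(\ldmode)}_{\ltopic,\lunit}}
=\frac{\Ntopic[\ltopic]}{\mu_\ltopic}\,P^{(\ldmode)}_\ltopic+Z_\ltopic^2 .
\]
Here $P^{(\ldmode)}_\ltopic$ is the genuine Pearson statistic with expected counts $\Ntopic[\ltopic]\Matt^{(\ldmode)}_{\ltopic,\lunit}$, and it tends to $\chi^2_{\nunits_\ldmode-1}$; grids are analogous.

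So the same $Z_\ltopic^2$ appears in the component block and in all $\ndmode+\ncmode$ rows of component $\ltopic$, and the blocks are dependent. The score converges to $(1+\ndmode+\ncmode)X+Y$, with independent $X\sim\chi^2_{\ntopic}$ and $Y$ chi-squared on $\ntopic(\sum_\ldmode\nunits_\ldmode+\sum_\lcmode\ngrid_\lcmode-\ndmode-\ncmode)$ degrees of freedom. That is not a chi-squared law, and its mean exceeds the claimed degrees of freedom by $\ntopic(\ndmode+\ncmode)+1$.

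Your Steps 1--3 go through as written only if the expected counts are normalized by the observed totals. For example, $\mathbb{E}[\Ntopic[\ltopic]]=N\,\Stopic_\ltopic/\sum_{\ltopic'}\Stopic_{\ltopic'}$ and $\mathbb{E}[\Ntopicmode[\ltopic,\lunit]]=\Ntopic[\ltopic]\,\Stopicmode_{\ltopic,\lunit}/\Stopic_\ltopic$, and likewise for grids. In that setting the lemma's degree-of-freedom count is correct. The paper's proof asserts the same per-block degrees of freedom and independence without addressing this normalization, so you inherited the gap rather than introduced it. Still, neither argument proves the lemma for $\anomscore$ as defined in Eq.~(\ref{eq:anomscore}).
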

This lemma indicates that we can compute the p-value $P(X > \anomscore)$. If the p-value is less than $0.05$, we reject the null hypothesis, and thus, $\tensorC$ is judged as an anomaly.
Otherwise, the null hypothesis cannot be rejected, so we judge $\tensorC$ as normal, and add $\cinterval, \Ntopic, \{\Ntopicmode\}_{\ldmode=1}^\ndmode, \{\Ntopicgrid\}_{\lcmode=1}^\ncmode$ to $\Statistics$.

\myparaitemize{Time complexity of \method}
Lemma \ref{lemma_complexity} indicates that our proposed algorithm
requires only constant computational time for the entire data stream length $\ntime$, thus \method is practical for semi-infinite data streams in terms of execution speed.
\begin{lemma}[Proof in Appendix \ref{sec:proof_complexity}]
\label{lemma_complexity}
The time complexity \method for the current tensor $\tensorC$  is
$O(\nepoch N_{event}\ntopic (\ndmode + \ncmode)+\nepoch\ntopic \nctime\Norder^3+\sum_{\ldmode=1}^\ndmode \nunits_\ldmode \ntopic+\nlfbgs \ntopic\sum_{\lcmode=1}^\ncmode\ngrid_\lcmode\Norder^3)$, where $\nepoch$ is the epoch count of \topic sampling, $N_{event} = \sum_{\ltime=1}^\nctime \nevent$ is the number of \records, and $\nlfbgs$ is the L-BFGS iteration count.
\end{lemma}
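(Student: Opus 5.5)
The plan is to follow Algorithm~\ref{alg:main} line by line, bound the cost of each block, and add the bounds. The key observation is that every step only touches quantities local to $\tensorC$ (its records, its $\nctime$ time points, the count matrices) or objects whose size is fixed by the model ($\ntopic$, $\nunits_\ldmode$, $\ngrid_\lcmode$, $\Norder$). The past enters only through the carried-over parameters $\hat\Parameters$, the boundary state $(\smoothmean_{\ltopic,\startTime},\smoothcov_{\ltopic,\startTime})$ and the statistics $\Statistics$. None of these grows with $\ntime$, so the total is independent of the stream length.

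\textbf{Inference loop.} I would treat the inference loop (repeated $\nepoch$ times) in two parts.

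\emph{Topic sampling.} For each of the $N_{event}$ records, Eq.~\eqref{eq:topic_sample} is evaluated for all $\ntopic$ components.
\begin{itemize}
\item The factor $\text{softmax}(\Mtime(\stamp_\levent))$ is shared by all records at a time point, so it can be cached per time point in $O(\ntopic)$.
\item The categorical factor costs $O(1)$ per attribute, since $\Ntopicmode$ and $\Ntopic$ are maintained incrementally. Removing and re-adding the record updates $O(\ndmode+\ncmode)$ counters.
\item The LGP factor, Eq.~\eqref{eq_lgp_approx}, is a table lookup after locating the grid cell. The normalizers are computed once per $\Mcatt_\ltopic^{(\lcmode)}$, outside the epoch loop, because $\Mcatt$ is fixed during sampling.
\end{itemize}
This gives $O(N_{event}\ntopic(\ndmode+\ncmode))$ per epoch.

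\emph{Latent dynamics.} For each $\ltopic$ and each of the $\nctime$ time points:
\begin{itemize}
\item One Polya-Gamma draw, Eq.~\eqref{eq:pg_sample}, costs $O(1)$ amortized, provided $\xi_{\ltime,\ltopic}$ is computed via a single log-sum-exp per time point, so $O(\ntopic\nctime)$ overall.
\item Eqs.~\eqref{eq:pg_update_sigma}--\eqref{eq:pg_update_mu} are scalar updates.
\item The Kalman filter and RTS smoother, Eqs.~\eqref{eq:forward_predmean}--\eqref{eq:backward_smoothcov}, perform a constant number of $(\Norder+1)\times(\Norder+1)$ matrix products and inversions per step, i.e.\ $O(\Norder^3)$.
\end{itemize}
The matrices $\lineartrans_{\ltime-1}$ and $\mathbf{Q}_{\ltime-1}$ need a matrix exponential, which is also $O(\Norder^3)$. $\mathbf{P}_\infty$ from the Lyapunov equation~\eqref{eq:riccati} is computed once, independent of the data. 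This contributes $O(\nepoch\ntopic\nctime\Norder^3)$.

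\textbf{Parameter estimation.} Eq.~\eqref{eq:update_Matt} is a closed-form update over all entries, costing $O(\ntopic\sum_\ldmode\nunits_\ldmode)$. For $\Mcatt$, each L-BFGS iteration evaluates the objective~\eqref{eq:log_llh_lgp} and the gradient~\eqref{eq:update_Mactt} for every $\ltopic$ and $\lcmode$.
\begin{itemize}
\item The softmax-type terms cost $O(\ngrid_\lcmode)$.
\item The innovation terms $\inovationMean$, $\inovationVar$ come from a Kalman pass over the $\ngrid_\lcmode$ grid points, costing $O(\ngrid_\lcmode\Norder^3)$.
\item The L-BFGS two-loop recursion with constant memory adds only $O(\ngrid_\lcmode)$.
\end{itemize}
This yields $O(\nlfbgs\ntopic\sum_\lcmode\ngrid_\lcmode\Norder^3)$.

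\textbf{Anomaly detection.} The score~\eqref{eq:anomscore} and the update of $\Statistics$ touch each entry of $\Ntopic$, $\Ntopicmode$ and $\Ntopicgrid$ once. That is $O(\ntopic(\sum\nunits_\ldmode+\sum\ngrid_\lcmode))$, which is absorbed by the terms above. The counts themselves are accumulated during sampling. The p-value is one evaluation of the chi-squared CDF using Lemma~\ref{lemma_anomaly}.

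\textbf{Main obstacle.} The hardest part is justifying the constant-per-record cost of the sampling step. This needs two things. First, the LGP normalizers in Eq.~\eqref{eq_lgp_approx} must be precomputed, since $\Mcatt$ is fixed until sampling has burned in. Second, $\text{softmax}(\Mtime)$ must be shared across records at the same time point. Otherwise an extra $\sum_\lcmode\ngrid_\lcmode$ factor would appear per record.

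Summing the bounds gives the stated expression. None of the terms involves $\ntime$.
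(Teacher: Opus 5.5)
Your proposal is correct and follows the same route as the paper's proof: it bounds separately the topic sampling, the Polya-Gamma plus Kalman--RTS estimation of $\Mtime$, the closed-form $\Matt$ update, the L-BFGS estimation of $\Mcatt$ (with an $O(\ngrid_\lcmode\Norder^3)$ pass per evaluation), and the anomaly score, and then sums the bounds. Your extra justifications (precomputing the LGP normalizers outside the epoch loop, a single log-sum-exp for $\xi_{\ltime,\ltopic}$, the per-step matrix exponentials, and absorbing the $O(\ntopic\sum_\lcmode\ngrid_\lcmode)$ anomaly-score cost into the L-BFGS term) fill in details the paper leaves implicit; the softmax caching you call necessary is not, since computing the softmax per record costs $O(\ntopic)$, which already fits in the $O(\ntopic(\ndmode+\ncmode))$ per-record budget.
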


\section{Experiments}
    \label{section:experiments}
    In this section,  we evaluate the performance of \method.
The experimental settings are detailed in Appendix \ref{sec:experiment_appendix}.
We answer the following questions through the experiments.
\begin{itemize}[leftmargin=2.5em]
    \item[(Q1)]\textit{Effectiveness:}
        How successfully does it discover interpretable summarization of real datasets?
    \item[(Q2)] \textit{Accuracy:}
        How accurately does it detect group anomalies from real datasets?
    \item[(Q3)] \textit{Scalability:}
        How does it scale in terms of computational time?
\end{itemize}
\TSK{
    \begin{figure}
    \centering
    \includegraphics[width=\columnwidth]{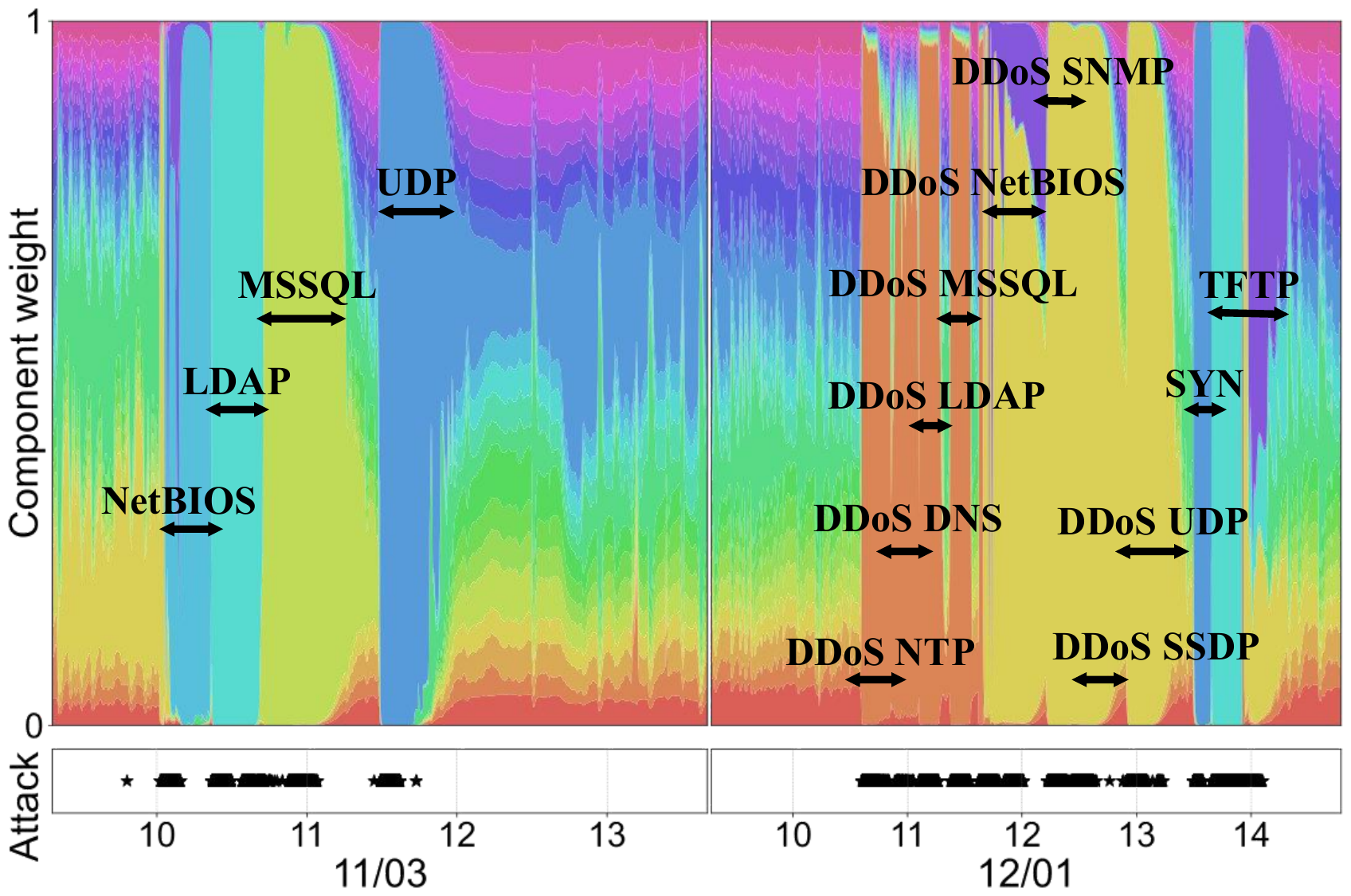}
    \caption{Dynamics of $\Mtime$ (above) and attacked time (below) in \pddos dataset.}
    \Description{In the top figure, the x-axis represents time, and the y-axis represents the assignment probability for each topic. In the bottom figure, the x-axis represents time, with a star marking each point of attack.}
    \label{fig:ddos_B}
\end{figure}
    \begin{table*}[ht!]
\caption{\textbf{Anomaly detection results. Best results are in bold, and second-best results are underlined (higher is better). The rightmost column shows the average value for each metric.}}
\label{tbl:accuracy}
\resizebox{1.0\linewidth}{!}{   
\begin{tabular}{l|cc|cc|cc|cc|cc|cc}
\toprule
 & \multicolumn{2}{c|}{\ciseven \cite{CI17}} & \multicolumn{2}{c|}{\cieight \cite{CI18}} & \multicolumn{2}{c|}{\edge \cite{edge}} & \multicolumn{2}{c|}{\ddos \cite{DDOS2019}} & \multicolumn{2}{c|}{\cupid \cite{CUPID}} & \multicolumn{2}{c}{\textbf{Average}} \\
 & \roc & \pr & \roc & \pr & \roc & \pr & \roc & \pr & \roc & \pr & \roc & \pr \\
\midrule
OneClassSVM \cite{OCSVM} & $0.587$ & $0.082$ & $0.594$ & $0.146$ & $0.662$ & $0.601$ & $0.900$ & $0.913$ & $0.467$ & $0.016$ & $0.642$ & $0.352$ \\
iForestASD \cite{IForestASD} & $0.844\pm0.001$ & $0.540\pm0.003$ & $\underline{0.781\pm0.001}$ & $\underline{0.428\pm0.005}$ & $0.700\pm0.009$ & $0.680\pm0.008$ & $0.881\pm0.001$ & $0.912\pm0.001$ & $0.957\pm0.004$ & $0.608\pm0.003$ & $0.833$ & $0.634$ \\
RRCF \cite{RRCF} & $0.877\pm0.002$ & $0.679\pm0.004$ & $0.763\pm0.009$ & $0.337\pm0.010$ & $0.927\pm0.002$ & $0.919\pm0.003$ & $0.896\pm0.002$ & $0.922\pm0.001$ & $0.974\pm0.004$ & $0.705\pm0.013$ & $0.888$ & $0.712$ \\
ARCUS \cite{ARCUS} & $0.500\pm0.002$ & $0.028\pm0.002$ & $0.503\pm0.004$ & $0.153\pm0.010$ & $0.501\pm0.001$ & $0.586\pm0.178$ & $0.500\pm0.002$ & $0.280\pm0.014$ & $0.497\pm0.008$ & $0.013\pm0.000$ & $0.500$ & $0.212$ \\
MStream \cite{MStream} & $0.905\pm0.000$ & $0.736\pm0.000$ & $0.779\pm0.000$ & $0.363\pm0.000$ & $0.928\pm0.000$ & $0.927\pm0.000$ & $0.899\pm0.000$ & $0.925\pm0.000$ & $0.991\pm0.000$ & $0.734\pm0.000$ & $0.900$ & $0.737$ \\
MemStream \cite{MemStream} & $0.893\pm0.000$ & $0.713\pm0.000$ & $0.781\pm0.000$ & $0.366\pm0.000$ & $\underline{0.935\pm0.000}$ & $\mathbf{0.935\pm0.000}$ & $0.950\pm0.002$ & $0.956\pm0.001$ & $0.977\pm0.000$ & $0.678\pm0.000$ & $0.907$ & $0.730$ \\
Anograph \cite{Anograph} & $\underline{0.921\pm0.000}$ & $\underline{0.741\pm0.000}$ & $0.776\pm0.001$ & $0.419\pm0.005$ & $0.928\pm0.000$ & $0.920\pm0.000$ & $\mathbf{0.974\pm0.000}$ & $\mathbf{0.970\pm0.000}$ & $\underline{0.994\pm0.000}$ & $0.814\pm0.001$ & $\underline{0.915}$ & $\underline{0.773}$ \\
CubeScope \cite{CubeScope} & $0.921\pm0.001$ & $0.545\pm0.003$ & $0.490\pm0.002$ & $0.123\pm0.001$ & $0.294\pm0.004$ & $0.421\pm0.004$ & $0.684\pm0.013$ & $0.715\pm0.010$ & $0.986\pm0.000$ & $\underline{0.872\pm0.004}$ & $0.675$ & $0.535$ \\
CyberCScope \cite{CyberCScope} & $0.625\pm0.037$ & $0.302\pm0.096$ & $0.659\pm0.090$ & $0.202\pm0.047$ & $0.771\pm0.054$ & $0.633\pm0.056$ & $0.502\pm0.176$ & $0.574\pm0.127$ & $0.940\pm0.034$ & $0.785\pm0.116$ & $0.699$ & $0.499$ \\
\midrule
\textbf{\method ({ours})} & $\mathbf{0.990\pm0.006}$ & $\mathbf{0.931\pm0.037}$ & $\mathbf{0.788\pm0.005}$ & $\mathbf{0.644\pm0.008}$ & $\mathbf{0.935\pm0.003}$ & $\underline{0.931\pm0.003}$ & $\underline{0.963\pm0.003}$ & $\underline{0.970\pm0.002}$ & $\mathbf{0.999\pm0.000}$ & $\mathbf{0.959\pm0.001}$ & $\mathbf{0.935}$ & $\mathbf{0.887}$ \\
\bottomrule
\end{tabular}
}
\end{table*}

}
\myparaitemize{Datasets}
We used five real network traffic/intrusion datasets, namely \pciseven \cite{CI17}, \pcieight \cite{CI18}, \pedge \cite{edge},  \pddos \cite{DDOS2019},\pcupid \cite{CUPID}, and one user-review dataset, namely \pamazon \cite{AmazonReview}.

\myparaitemize{Baselines}
We undertook comparisons with the following competitors for streaming anomaly detection: OneClassSVM \cite{OCSVM}, iForestASD \cite{IForestASD},  RRCF \cite{RRCF}, ARCUS \cite{ARCUS}, Mstream \cite{MStream},  MemStream \cite{MemStream},  Anograph \cite{Anograph}, CubeScope \cite{CubeScope}, and CyberCScope \cite{CyberCScope}.
\subsection{Q1: Effectiveness}
\label{subsec_experiment_effective}
We first demonstrate how effectively \method discovers interpretable summarization on real datasets.

\myparaitemize{User Review Analysis}
Fig. \ref{fig:amazon_movie} shows our mining result for \pamazon dataset.
First, Fig. \ref{fig:amazon_movie}(a) shows the characteristics of four \topics, where we manually
named them "Adventure", "Kids", "SF/Comedy", "Western", in title (i.e., $\Matt$) and price (i.e., $\Mcatt$).
From the word clouds, Adventure is associated with blockbuster franchises such as Harry Potter and Star Wars. It exhibits a right-skewed price distribution with a high-price tail, whereas Kids (e.g., Trolls World Tour) concentrates in the low–to–mid price range. SF/Comedy (e.g., Endless Corridor, Wild Mountain Thyme, Tenet) and Western (e.g., The Office, Overcomer, Jumanji) films also appear at relatively low prices, with Western having the lowest median price. These observations indicate that each component captures coherent semantics in $\Matt$ while exhibiting distinct regimes in the continuous attribute $\Mcatt$.
Next, Fig. \ref{fig:amazon_movie}(b) shows the latent dynamics (i.e., $\Mtime$), namely, which \topics the users are interested in 2020.
In April, a spike of \topic Kids is observed, which is due to the closure of movie theaters caused by the COVID-19 pandemic and Universal’s release of the animation film "Trolls World Tour" via streaming starting on April 10, 2020.
Similarly, the release of "Endless Corridor" and "Wile Mountain Thyme" increase the users' attention to \topic SF/Comedy.

\myparaitemize{Cybersecurity systems}
As discussed in Section \ref{section:introduction}, Fig. \ref{fig:preview} showed that \method can effectively estimate \topics with various distributional characteristics and their temporal changes (dynamics).
Additionally, Fig. \ref{fig:ddos_B} contrasts the temporal variation of the estimated $\Mtime$ with the actual time of the cyberattack in the \pddos dataset.
During the attacks, a specific \topics increases sharply.
Please also see the results in \pciseven and \pcieight  datasets in Appendix \ref{appendix_effective}.

These results show that \method can capture the interpretable \topics and their temporal dynamics consistent with external events, such as movie releases or cyber-attacks.

\subsection{Q2: Accuracy}
We next evaluate the accuracy of \method in terms of group anomaly detection.
We defined a current tensor as anomalous if it contains more than one hundred anomalous \records.
For point-anomaly detection methods, the anomaly score of a current tensor was defined as the sum of the anomaly scores of the \records contained within it.
Table \ref{tbl:accuracy} shows \roc and \pr for each method, where a higher value indicates better detection accuracy.
All results are averaged over three runs with  random seeds.
Our proposed method \method achieves the highest average detection accuracy across the datasets, which demonstrates that \method is effective for group anomaly detection.
Point-anomaly detection methods (One Class SVM, iForestASD, RRCF, ARCUS) exhibit low performance in detecting group anomalies.
MStream and MemStream achieve high detection accuracy, but since they cannot exploit the continuity of timestamps, they suffer from many false positives, resulting in low \pr, especially on \pcieight dataset.
Although Anograph obtains better performance with \pddos, it underperforms \method across other datasets because it is a graph-based method and cannot handle multi-aspect data.
CubeScope discretizes \cmodes, and CyberCScope assumes \cmodes follow only a Gamma distribution, which limit their abilities to represent diverse \attribute distributions and degrade detection performance.
\subsection{Q3: Scalability}
Finally, we verify the computation time of \method.
The left part of Fig. \ref{fig_scalability} shows the average wall clock time of an experiment performed on three datasets, \pciseven, \pedge, \pddos.
Although there are slight variations due to differences in the number of \records, thanks to the incremental update, \method can maintain stable computational performance independent of the overall stream length.
The right part of Fig. \ref{fig_scalability} shows the computational time of \method when varying the number of \records in $\tensorC$.
Since \method achieves fast model estimation for $O(N_{event})$ time (as discussed in Lemma \ref{lemma_complexity}), its computation time is linear with respect to the number of \records (i.e., slope = 1 in log-log scale).
\TSK{

\begin{figure}[t]
    \centering
    \begin{subfigure}[b]{0.47\columnwidth}
        \centering
        \includegraphics[width=\linewidth]{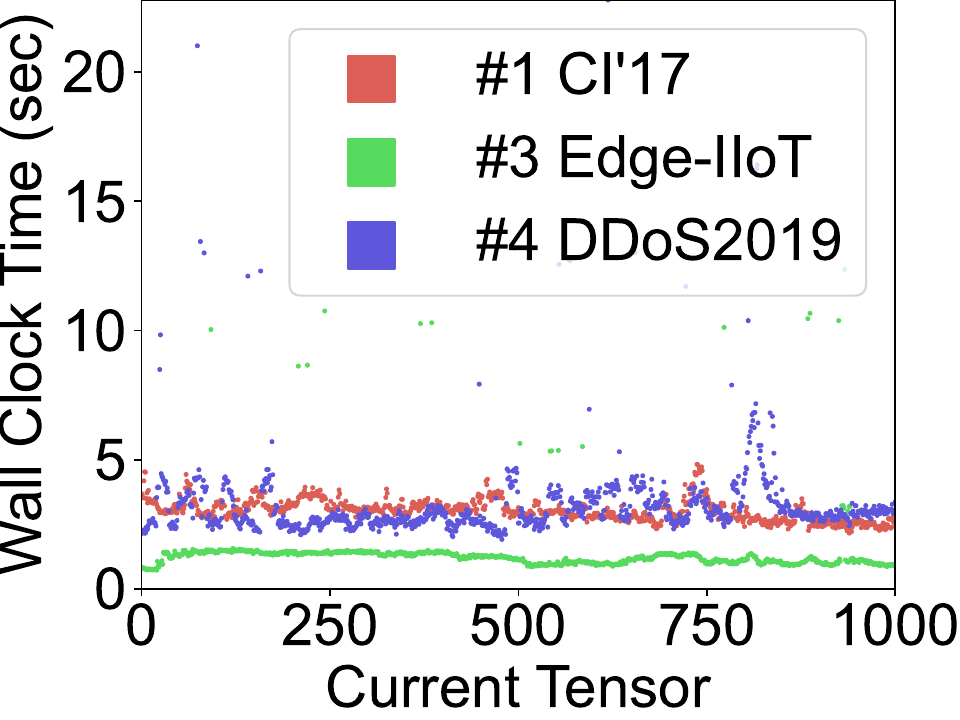}
        \label{subfig_elapsed}
    \end{subfigure}
    \hfill
    \begin{subfigure}[b]{0.47\columnwidth}
        \centering
        \includegraphics[width=\linewidth]{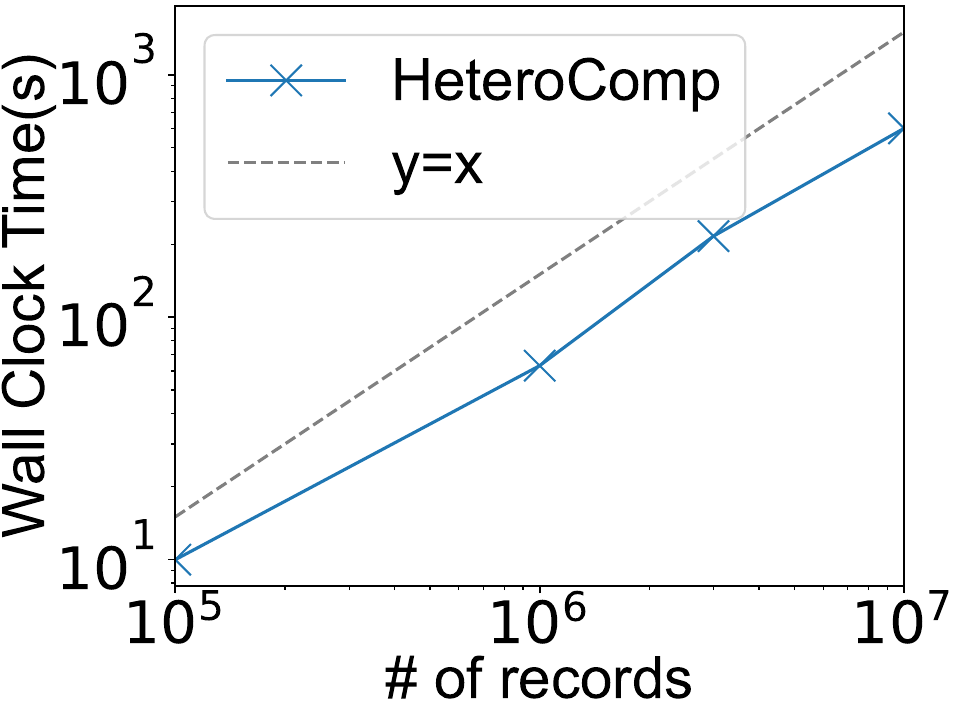}
        \label{subfig_linear_scalability}
    \end{subfigure}
    \vspace{-10pt}
    \caption{Complexity analysis of \method: (Left) Average wall clock time vs. current tensor. (Right)  Wall clock time vs. \# of records in $\tensorC$. The algorithm scales linearly (i.e., slope = 1 in log-log scale).}
    \Description{In the left figure, the x-axis represents the current tensor index and the y-axis represents its processing time. The processing time is parallel to the x-axis. In the right figure, the x-axis represents the number of events in the tensor and the y-axis represents its processing time. The processing time increases linearly with the number of events.}
    \label{fig_scalability}
\end{figure}

}

\section{Conclusion}
    \label{section:conclusion}
    In this paper, we propose \method, which summarizes \streamNames and detects anomalies in real-time.
\method can simultaneously \topics and their temporal dynamics without restricting to any specific parameterized form of \cmodes, and immediately detect anomalous behavior based on them.
Our approach exhibits all of the following desirable properties that we listed in the introduction. (a)  \textit{Effective}: It discovers latent \topics and their latent dynamics. (b) \textit{Accurate}: Our experiments demonstrated that \method detects group anomalies accurately.
(c) \textit{Scalable}: The computational time does not depend on the data stream length.

\begin{acks}
We would like to thank the anonymous referees for their valuable and helpful comments.
This work was supported by
JSPS KAKENHI Grant-in-Aid for Scientific Research Number JP24KJ1618,
JST CREST JPMJCR23M3,
JST START JPMJST2553,
JST CREST JPMJCR20C6,
JST K Program JPMJKP25Y6,
JST COI-NEXT JPMJPF2009,
JST COI-NEXT JPMJPF2115,
the Future Social Value Co-Creation Project - Osaka University.
\end{acks}
\bibliographystyle{lib/ACM-Reference-Format}
\balance
\bibliography{bib/references}
\appendix
\section*{Appendix}
\label{section:appendix}
\section{Proposed model}
\subsection{Symbols}
The main symbols we use in this paper are defined in Table \ref{table:define}.
\TSK{
    \begin{table}[h]
\centering
\caption{Symbol and its definition.}
\label{table:define}
\resizebox{1.0\linewidth}{!}{
    \begin{tabular}{l|l}
        \toprule
        Symbol & Definition \\

        \midrule
        $\tensor$ & Whole event tensor stream,\\
            & i.e., $\tensor\shapeN^{\ntime\times\tensorShape}$.\\
        $\tensorC$
            & Current tensor,\\
            & i.e.,$\tensorC\shapeN^{\nctime\times\tensorShape}$.\\ 
        $\ndmode $
            & Number of \dmodes in tensor.\\
        $\ncmode $
            & Number of \cmodes in tensor.\\
        $\nunits_1, \dots, \nunits_{\ndmode}$
            & Number of unique values in \dmode .\\
        ${\nctime}$
            & Number of unique timestamps in current tensor.\\
    
        $\stamp_{(\startTime+1)}, \dots, \stamp_{(\startTime+\nctime)}$
            & Timestamps in $\tensorC$.\\
        $\cinterval$ & Time interval of $\tensorC$, i.e., $\cinterval=\stamp_{(\startTime+\nctime)}-\stamp_{\startTime}$.\\
            
        \midrule
        ${\ntopic}$
            & Number of \topics . \\
        ${\Matt^{(\ldmode)}\shapeR^{{\ntopic}\times{\nunits_{\ldmode}}}}$
            & Parameters of $\ldmode$-th \dmode.\\
        ${\Mcatt^{(\lcmode)}\shapeR^{\ntopic\times\ngrid_{\lcmode}}}$
            & Parameters of $\lcmode$-th \cmode.\\
        ${\Mtime}\shapeR^{\ntopic\times\nctime}$
            & Latent dynamics of \topics.\\
        ${\Patt^{(\ldmode)},\Ptime, \Pcatt^{(\lcmode)}}$
            & ${\Matt^{(\ldmode)}, \Mtime, \Mcatt^{(\lcmode)}}$ estimated at the previous tensor. \\
        $\ngrid_1, \dots, \ngrid_{\ncmode}$
            & Number of grids in \cmode.\\
        $\grid_1, \dots, \grid_{\ngrid_{\lcmode}} $  & Grids of cmode.\\
 $\Ktime, \  \Kcatt$& Kernel covariance function of $\Mtime, \Mcatt$.\\

        \midrule
        $\nevent\in \N$& Number of \records at time $\stamp_\ltime$. \\
        $\Ntopic\in \N^\ntopic$ & \Topic count vector of \records in $\tensorC$.\\
        $\Ntimetopic\in \N$& Number of \records assigned to \topic $\ltopic$ at time $\stamp_\ltime$.\\
        $\Ntopicmode \shapeN^{\ntopic \times \nunits_\ldmode}$ & \Topic{}-unit count matrix for $\ldmode$-th \dmode in $\tensorC$.\\
        $\Ntopicgrid\shapeN^{\ntopic \times \ngrid_\lcmode}$ & \Topic{}-grid count matrix for $\lcmode$-th \cmode in $\tensorC$.\\

        \midrule
        $\Parameters$ & Model Parameters set, defined in Definition \ref{def:model_parameters}.\\
        $\Statistics $ & Stream Statistics, defined in Definition \ref{def:stream_statistic}.\\
        $\anomscore$ & Anomaly score of $\tensorC$.\\
        \bottomrule
    \end{tabular}
}
\vspace{-12pt}
\end{table}

}

\subsection{Proof of Lemma \ref{lemma_anomaly}}
\label{sec:proof_anomaly}
\begin{proof}
$ \chi_\ltopic^2(\Ntopic, \ntopic)$ follows a chi-squared distribution with $\ntopic-1$ degrees of freedom.
After the \topic is assigned, $\chi_\lunit^2(\Ntopicmode[\ltopic, \cdot], \nunits_\ldmode)$ are independently distributed for each \topic, each adhering to a chi-squared distribution with $\nunits_\ldmode-1$ degrees of freedom.
 Similarly, after the \topic is assigned, $\chi_\lgrid^2(\Ntopicgrid[\ltopic, \cdot], \ngrid_\lcmode)$ are independently distributed for each \topic, each adhering to a chi-squared distribution with $\ngrid_\lcmode-1$ degrees of freedom.
The sum of independent chi-squared random variables follows a chi-squared distribution with degrees of freedom equal to the sum of their individual degrees of freedom.
Therefore, $\anomscore$ follows a a chi-squared distribution with $\ntopic (\sum_{\ldmode}^\ndmode\nunits_\ldmode + \sum_{\lcmode}^\ncmode\ngrid_\lcmode-\ndmode-\ncmode + 1)-1$ degrees of freedom .
\end{proof}
\subsection{Proof of Lemma \ref{lemma_complexity}}
\label{sec:proof_complexity}
\begin{proof}
    Derive the time complexity for each element.
    For each iteration, sampling a \topic requires $O(N_{event}\ntopic (\ndmode + \ncmode))$ because we need  $O(\ntopic (\ndmode + \ncmode))$ to compute Equation (\ref{eq:topic_sample}) for each event.
    And estimation of $\Mtime$ requires $O(\ntopic \nctime\Norder^3)$ because the computation of Equation (\ref{eq:pg_update_sigma}, \ref{eq:pg_update_mu}) requires $O(\nctime)$, and Forward-Backward algorithm (\ref{eq:forward_predmean})-(\ref{eq:backward_smoothcov}) requires $O(\nctime\Norder^3)$ for each \topic.
    \par
    Estimation of $\Matt$ requires $O(\nunits_\ldmode \ntopic)$ to compute \eqref{eq:update_Mactt}, so the total cost is $O(\sum_{\ldmode=1}^\ndmode \nunits_\ldmode \ntopic)$.
    \par
    In the estimation of $\Mcatt$, the computation of the gradient \eqref{eq:update_Mactt} requires $O(\ngrid_\lcmode{\Norder}^3)$ for the Forward-Backward algorithm (\ref{eq:forward_predmean})-(\ref{eq:backward_smoothcov}).
    We repeat this computation for each iteration, for each \topic, and for each \cmode, so the total  cost is $O(\nlfbgs\ntopic\sum_{\lcmode=1}^\ncmode\ngrid_\lcmode\Norder^3)$.
    \par
    In the anomaly detection, the computation of Equation (\ref{eq:anomscore}) requires $O(\ntopic + \sum_{\ldmode=1}^\ndmode\nunits_\ldmode\ntopic  + \sum_{\lcmode=1}^\ncmode\ngrid_\lcmode\ntopic)$.
    \par
    Aggregating these, the total computational complexity is $O(\nepoch \allowbreak N_{event}\ntopic (\ndmode + \ncmode)+\nepoch\ntopic \nctime\Norder^3+\sum_{\ldmode=1}^\ndmode \nunits_\ldmode \ntopic+\nlfbgs \ntopic\sum_{\lcmode=1}^\ncmode\ngrid_\lcmode\Norder^3)$.
\end{proof}

    

\begin{figure*}[ht]
    \centering 
    \includegraphics[width=\textwidth]{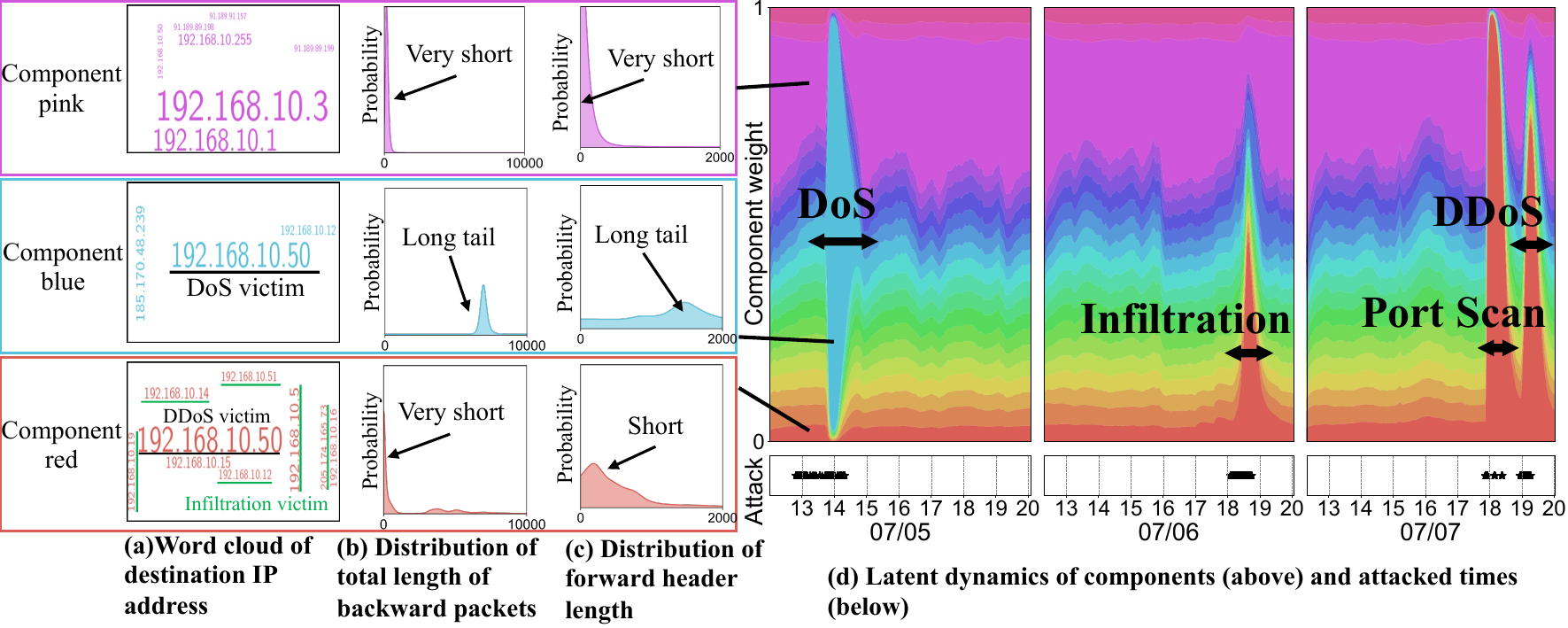}
    \caption{Modeling power of \method over \pciseven dataset.
    Our proposed method can find the hidden \topics which represents different characteristics in both (a) \dmode and (b) (c)\cmodes,  and (d) \topic weight exhibit significant changes when cyber-attacks occurs.
    }
    \Description{}
    \label{fig:anomaly_ciseven}
    \vspace{-10pt} 
\end{figure*}

\section{Experimental Evaluation}
\subsection{Experimental Setup}
\label{sec:experiment_appendix}
In this section, we describe the experimental setup in detail.
We conducted all our experiments on an Intel Xeon Gold 6444Y 3.6GHz quad core CPU with 768GB of memory and running Linux.

\myparaitemize{Hyperparameter}
We use the Matern-3/2 kernel function as a $\Ktime, \Kcatt$.
We set the number of \topics $\ntopic$ to $20$.
We set the number of grids $\ngrid=300$ for all \cmodes, and epoch count $\nepoch=30$.
We set the current tensor size $\nctime$ to $30$ for \pciseven, \pcieight, \pedge, \pddos, \pcupid datasets, and $14$ for \pamazon dataset.

\myparaitemize{Datasets}
Table \ref{table:datasets} summarizes the features employed in the experiments.
\begin{itemize}[leftmargin=*]
    \item \pciseven \cite{CI17}:
        It consists of up to 18 million event logs, in which various types of intrusions occur over time.
        Normal user behavior is executed through scripts.
        The data set contains a wide range of attack types like SSH brute force, heartbleed, botnet, DoS, DDoS, web and infiltration attacks.
        A previous study \cite{ErrorCI} reported errors in this dataset and released improved versions \footnote{\url{https://intrusion-detection.distrinet-research.be/CNS2022/CICIDS2017.html}},
        which we used throughout the experiments.
    \item \pcieight \cite{CI18}: It  includes seven different attack scenarios: Brute-force, Heartbleed, Botnet, DoS, DDoS, Web attacks, and infiltration of the network from inside.
    The attacking infrastructure includes $50$ machines and the victim organization has $5$ departments and includes $420$ machines and $30$ servers.
    The dataset includes the captures network traffic and system logs of each machine, along with 80 features extracted from the captured traffic using CICFlowMeter-V3.
    A previous study \cite{ErrorCI} reported errors in this dataset and released improved versions \footnote{\url{https://intrusion-detection.distrinet-research.be/CNS2022/CSECICIDS2018.html}},  which we used throughout the experiments.
    \item \pedge \cite{edge}:
        It is a new comprehensive cybersecurity dataset for IoT and IIoT applications, designed for intrusion detection systems\footnote{\url{https://ieee-dataport.org/documents/edge-iiotset-new-comprehensive-realistic-cyber-security-dataset-iot-and-iiot-applications}}.
    \item \pddos \cite{DDOS2019}: It is a realistic and comprehensive dataset for evaluating Distributed Denial of Service (DDoS) attack detection systems, as existing datasets have significant shortcomings
    \footnote{\url{https://www.unb.ca/cic/datasets/ddos-2019.html}}.
    \item \pcupid \cite{CUPID}: It emulates a small physical network with several virtualized systems \footnote{\url{https://www.kaggle.com/datasets/dhoogla/cupid-2022}}. Its main objective is to provide both scripted and human-generated traffic produced by professional penetration testers, allowing researchers to investigate the differences between the two.
    \item \pamazon \cite{AmazonReview}: It is a large-scale Amazon Reviews dataset, collected in 2023 by McAuley Lab \footnote{\url{https://amazon-reviews-2023.github.io/}}.
\end{itemize}

\begin{table}[t]
    \vspace{-1.2em}
    \caption{Dataset description. Here, ‘fwd’ and ‘bwd’ denote ‘forward’ and ‘backward,’ respectively.}
    \label{table:datasets}
    \resizebox{1.0\linewidth}{!} {
        \begin{tabular}{l||c|c||c|c}
            \toprule
            Dataset& Categorical& $\ndmode$& Continuous& $\ncmode$\\
            \midrule

            \ciseven \cite{CI17}& (Src/ Dst) IP address& $4$ & Flow duration&  $6$  \\
            & Protocol & & Total length of (fwd /bwd) packet&\\
            &Dst port & & (Fwd /Bwd) header length &\\
            & & &  Flow IAT mean &\\
            \midrule

            \cieight \cite{CI18} &  ''&  $4$ & ''&  $6$ \\
            \midrule
            \edge \cite{edge} &  (Src/ Dst) IP address & $6$ & TCP length& $1$ \\
            & (Src/ Dst) port & & &\\
            & TCP flag& & &\\
            & Protocol & & &\\
            \midrule

            \ddos \cite{DDOS2019}& (Src/ Dst) IP address & $3$ & Flow duration& $6$ \\
            & Protocol& & Total length of fwd packet &\\
            & & &Fwd header length &\\
            & & & ACK flag count &\\
            & & & Flow bytes/s &\\
            & & & Flow packets/s &\\
            \midrule

            \cupid \cite{CUPID}& (Src/ Dst) IP address & $5$ &Flow duration & $7$\\
            & (Src/ Dst) Pport & &Total length of fwd packet &\\
            & Protocol & &Bwd header length &\\
            & & & Flow IAT mean &\\
            & & & Flow bytes/s &\\
            & & & Flow packets/s &\\
            & & & Minimum fwd seg size&\\
            \midrule

            \amazon \cite{AmazonReview} & Title & $23$ & Price  & $1$\\
                    & Category flag(Action, &  &   & \\
                    & Adventure, Anime, &  &   & \\
                    & Arts, Cerebral  &  &   & \\
                    & Christmas, Classical, &  &   & \\
                    & Comedy, Documentary,&  &   & \\
                    & Fantasy, Fitness,  &  &   & \\
                    & Horror, Kids, &  &   & \\
                    & Military, Music, &  &   & \\
                    & Musicals, Mystery,  &  &   & \\
                    & Religion, Romance, &  &   & \\
                    & SF,
Thriller, Westerns)&  &   & \\
            \bottomrule
        \end{tabular}
    }
\end{table}

\myparaitemize{Baselines}
The details of the baselines we used throughout our extensive experiments are summarized as follows:

\begin{itemize}[leftmargin=*]
    \item OneClassSVM(One Class Solid Vector Machine) \cite{OCSVM}: It is a classification principled data stream anomaly detection algorithm.
    We set an upper bound on the fraction to $0.1$, and the learning rate to $0.01$.

    \item iForestASD \cite{IForestASD}: An Anomaly Detection Approach Based on Isolation Forest Algorithm for Streaming Data using Sliding Window.
    Following \cite{MemStream}, we set the window size to $2048$ and the number of estimators to $100$.

    \item RRCF(Robust Random Cut Forest) \cite{RRCF}: Isolation Forest-based method designed for the high-dimensional data anomaly detection problem.
    Following \cite{MemStream}, we set the number of trees to $4$, and  shingle size to $4$, and size of tree to $256$.

    \item ARCUS \cite{ARCUS}: A deep online anomaly detection framework, which uses an adaptive model pool to manage multiple classification models to handle multiple temporal concept drifts.
    We use DAGMM \cite{DAGMM} as instances of ARCUS.
    Following the original paper, we set the batch size to $512$, the learning rate to $0.0001$, the number of layers in AE to $3$, the latent dimensionality of AE to $24$, and the minimum batch size to $32$.

    \item MStream \cite{MStream}: A streaming multi-aspect data anomaly detection framework using locality sensitive hashing. We set the temporal decay factor to $\alpha=0.5$.

    \item MemStream \cite{MemStream}: Streaming approach using a denoising AutoEncoder and a memory module.
    We set the memory size $N = 64$ and the threshold for concept drift $\beta = 0.01$.

    \item Anograph \cite{Anograph}: Graph based streaming anomaly detection method.
    Following the original paper, we set the number of buckets to $32$, edge thresholds to $100$, and time window to $60$.

    \item CubeScope \cite{CubeScope}: An online tensor factorization method based on probabilistic generative models. We set \topic size to $K=48$, as used in \cite{CyberCScope}.

    \item CyberCScope \cite{CyberCScope}: A tensor decomposition method which detects time-varying anomaly patterns while distinguishing between \dmodes and \cmodes. We set \topic size to $K=48$, as used in the original paper.
\end{itemize}

We used open-sourced implementations of ARCUS \cite{ARCUS},  MStream \cite{MStream}, MemStream \cite{MemStream}, CubeScope \cite{CubeScope}, CyberCScope \cite{CyberCScope},  provided by the authors.
For iForestASD \cite{IForestASD} and RRCF \cite{RRCF}, we use the open-source library PySAD \cite{PySAD} implementation.
We also used the open-source implementation of  OneClassSVM \cite{OCSVM} in the river library \cite{montiel2021river}.
For Anograph \cite{Anograph}, we use the open-source implementation \cite{anographRiver} because the original code is implemented in C.

\subsection{Effectiveness}
\label{appendix_effective}
\begin{figure}[t]
     \centering
    \includegraphics[width=\linewidth]{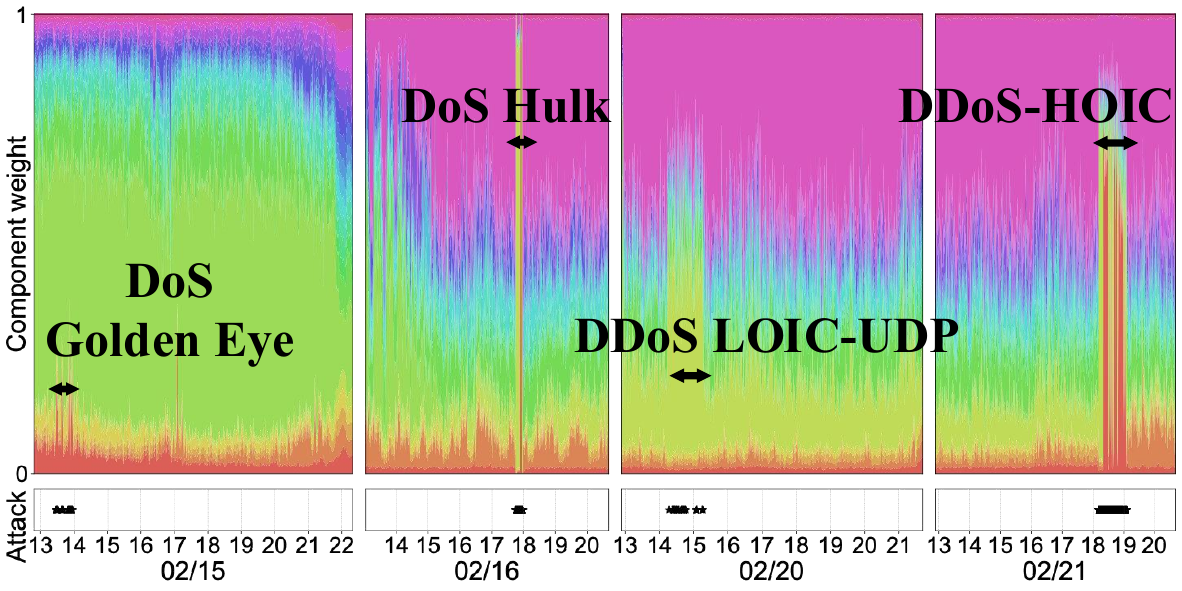}
    \caption{Dynamics of $\Mtime$ in \pcieight. The area of each color represents the \topic assignment probability at each time.}
    \Description{The x-axis represents time, and the y-axis represents the assignment probability for each \component.}
    \label{fig:B_cieight}
    
\end{figure}
We also demonstrate how effectively \method works on datasets different from the ones presented in Section \ref{subsec_experiment_effective}.
Fig. \ref{fig:anomaly_ciseven} shows the analysis of \pciseven dataset.
Fig. \ref{fig:anomaly_ciseven}(a)(b)(c) shows the characteristics of three \topics pink, blue, red.
First, Fig. \ref{fig:anomaly_ciseven}(a) shows the word clouds of destination IP address \attribute (i.e., $\Matt$).
A larger size in the word cloud denote a stronger relationship with the \topic.
\Topic pink contains \records sent to IP address 192.168.10.3 and 192.168.10.3, while \Topic blue consists of \records sent to 192.168.10.50, the victim of the DoS attack.
\Topic red consists of the victims of the Infiltration attack (green underlined) and the victim of DDoS attack (i.e., 192.168.10.50).
Next, Fig. \ref{fig:anomaly_ciseven}(b) and Fig. \ref{fig:anomaly_ciseven}(c) show the probability density of the total length of backward packets and the probability distribution of the forward header length, respectively (i.e., $\Mcatt$).
In Fig. \ref{fig:anomaly_ciseven}(b), \topic pink and red follow exponential-like distributions, whereas \topic blue shows a long-tailed distribution, indicating that \records in pink and red have shorter total backward packet lengths, while those in blue tend to have longer ones.
These results show that \method can flexibly represent various distributions of \cmodes according to the data.
Fig. \ref{fig:anomaly_ciseven} (d) visualizes the latent dynamics of \topics (i.e., $\Mtime$) in the  \pciseven dataset.
During DoS attacks, \topic blue dominates, whereas \topic red increases sharply during Infiltration and Port Scan attacks.
\par
Similarly, Fig. \ref{fig:B_cieight} visualizes the latent dynamics of \topics (i.e., $\Mtime$) and attacked times in the \pcieight dataset.
The proportions of \topics red and green increased sharply during periods of cyber-attacks (e.g., DoS Golden Eye, DoS Hulk, DDoS LOIC-UDP, DDoS-HOIC).
\par
These results show that \method can capture the interpretable \topics in both \dcmodes, and their temporal dynamics consistent with external events, such as cyber-attacks.

\end{document}